\documentclass[11pt]{article} 
\usepackage{jmlr2e} 

\usepackage{lastpage}

\usepackage{wrapfig}
\usepackage{lipsum}
\usepackage{mwe}
\usepackage{algorithm}
\usepackage{algpseudocode}%

\usepackage{amsmath,amssymb}


\usepackage{microtype}
\usepackage{graphicx}
\usepackage{subfigure}
\usepackage{booktabs} 

\usepackage[table]{xcolor}
\usepackage{epstopdf}
\usepackage{amsmath}
\usepackage{tcolorbox}
\usepackage{array}
\usepackage{hyperref}
\usepackage{amsmath,amssymb}
\usepackage{tikz}
\usepackage{float}

\usepackage[utf8]{inputenc}
\usepackage[english]{babel}
\usepackage{natbib}

\usepackage[utf8]{inputenc} 
\usepackage[T1]{fontenc}    
\usepackage{hyperref}       
\usepackage{url}            
\usepackage{booktabs}       
\usepackage{amsfonts}       
\usepackage{nicefrac}       
\usepackage{microtype}      
 
\usepackage{multicol}
\usepackage{color}
 
\newtheorem{thm}{Theorem}[section]
\newtheorem{lem}{Lemma}[section]

\usepackage{comment}

\setlength{\columnseprule}{1pt}

 \makeatletter
\renewcommand*{\thanks}[1]{%
  \footnotemark
  \protected@xdef\@thanks{\@thanks
    \protect\footnotetext[\arabic{footnote}]{#1}}%
}
\makeatother


 \jmlrheading{21}{2020}{1-\pageref{LastPage}}{6/18; Revised
11/20}{12/20}{18-352}{Rachel Ward, Xiaoxia Wu and  L{\'e}on Bottou}
\ShortHeadings{AdaGrad stepsizes: Sharp convergence over nonconvex landscapes}{Ward, Wu and Bottou}

\firstpageno{1}
 \begin{document}
\title{AdaGrad stepsizes: Sharp convergence over nonconvex landscapes}

\author{
\\
 \name  Rachel Ward \textbf{$^*$}  \email rward@math.utexas.edu \\ 
 \name Xiaoxia Wu  \thanks{Equal Contribution; work done at Facebook AI Research.} \email xwu@math.utexas.edu \\
 \addr Department of Mathematics\\
  The University of Texas at Austin\\
2515 Speedway, Austin, TX, 78712, USA   
 \AND
 \name  L{\'e}on Bottou \email leonb@fb.com \\
   \addr  Facebook AI Research \\
  \addr    770 Broadway, New York, NY, 10019, USA\\
}
\editor{Mark Schmidt}

 \maketitle

\begin{abstract}
Adaptive gradient methods such as AdaGrad and its variants update the stepsize in stochastic gradient descent on the fly according to the gradients received along the way; such methods have gained widespread use in large-scale optimization for their ability to converge robustly, without the need to fine-tune the stepsize schedule. Yet, the theoretical guarantees to date for AdaGrad are for online and convex optimization.   We bridge this gap by providing theoretical guarantees for the convergence of AdaGrad for smooth, nonconvex functions.  We show that the norm version of AdaGrad (AdaGrad-Norm) converges to a stationary point at the $\mathcal{O}(\log(N)/\sqrt{N})$ rate in the stochastic setting, and at the optimal $\mathcal{O}(1/N)$ rate in the batch (non-stochastic) setting -- in this sense, our convergence guarantees are ``sharp''.  In particular, the convergence of AdaGrad-Norm is robust to the choice of all hyper-parameters of the algorithm, in contrast to stochastic gradient descent whose convergence depends crucially on tuning the step-size to the (generally unknown) Lipschitz smoothness constant and level of stochastic noise on the gradient. Extensive numerical experiments are provided to corroborate our theoretical findings; moreover, the experiments suggest that the robustness of AdaGrad-Norm extends to the models in deep learning.
\end{abstract}

\begin{keywords}
nonconvex optimization,  stochastic offline learning, large-scale optimization, adaptive gradient descent, convergence
\end{keywords}

\section{Introduction}
\label{sec:intro}

Consider the problem of minimizing a differentiable non-convex function $F: \mathbb{R}^d \rightarrow \mathbb{R}$ via stochastic gradient descent (SGD); starting from $x_0 \in \mathbb{R}^d$ and stepsize $\eta_0 > 0$, SGD iterates until convergence
\begin{align}
\label{GD_basic}
x_{j+1} &\leftarrow x_j - \eta_j  G(x_j),
\end{align}
where $\eta_j > 0$ is the stepsize at the $j$th iteration and $G(x_j)$ is the stochastic gradient in the form of a random vector satisfying $\mathbb{E}[G(x_j)] = \nabla F(x_j)$ and having bounded variance.  SGD  is the de facto standard for deep learning optimization problems, or more generally, for the large-scale optimization problems where the loss function $F(x)$ can be approximated by the average of a large number $m$ of component functions, $F(x) =\frac{1}{m} \sum_{i=1}^m f_i(x)$. It is more efficient to measure a single component gradient $\nabla f_{i_j}(x),  i_j \sim \text{Uniform}\{1,2,\dots, m\}$ (or subset of component gradients), and move in the noisy direction $G_j(x) = \nabla f_{i_j}(x),$ than to compute a full gradient $\frac{1}{m}\sum_{i=1}^m \nabla f_i(x)$. 

For non-convex but smooth loss functions $F$, (noiseless) gradient descent (GD) with constant stepsize converges to a stationary point of $F$ at rate $\mathcal{O} \left({1}/{N} \right)$ with the number of iterations $N$ \citep{nesterov1998introductory}. In the same setting, and under the general assumption of bounded gradient noise variance, SGD with constant or decreasing stepsize $ \eta_j =\mathcal{O}\left({1}/{\sqrt{j}}\right)$ has been proven to converge to a stationary point of $F$ at rate  $\mathcal{O} \left({1}/{\sqrt{N}} \right)$  \citep{ghadimi2013stochastic, BCN16}.  The $\mathcal{O} \left({1}/{N} \right)$  rate for GD is the best possible worst-case dimension-free rate of convergence for any algorithm \citep{carmon2019lower}; { faster convergence rates in the noiseless setting are available under the mild assumption of additional smoothness \citep{agarwal2017, carmonconvex, carmon2018accelerated}. In the noisy setting, faster rates than $\mathcal{O} \left({1}/{\sqrt{N}} \right)$  are also possible  using accelerated SGD methods \citep{ghadimi2016accelerated, allen2016improved, reddi2016fast, allen2017natasha, xu2017neon, zhou2018stochastic,fang2018nips}.  For instance, \cite{zhou2018stochastic} and \cite{fang2018nips} obtain the rate $\mathcal{O} \left({1}/{{N}^{2/3}} \right)$ without requiring finite-sum
structure but with an additional assumptions about Lipschitz
continuity of the stochastic gradients, which they exploit to reduce variance.}

Instead of focusing on faster convergence rates for SGD, this paper focuses on adaptive stepsizes \citep{cutkosky2017online,levy2017online} that make the optimization algorithm more robust to (generally unknown) parameters of the optimization problem, such as the noise level of the stochastic gradient and the Lipschitz smoothness constant $L$ of the loss function defined as the smallest number $L > 0$ such that $\| \nabla F(x) - \nabla F(y) \| \leq L \| x - y \|$ for all $x ,y$. 
In particular, the $\mathcal{O} \left({1}/{N} \right)$ convergence of GD with fixed stepsize is guaranteed only if the fixed stepsize $\eta > 0$ is carefully chosen such that $\eta \leq 1/L$ -- choosing a larger stepsize $\eta$, even just by a factor of 2, can result in oscillation or divergence of the algorithm \citep{nesterov1998introductory}.  Because of this sensitivity, GD with fixed stepsize is rarely used in practice; instead, one adaptively chooses the stepsize $\eta_j > 0$ at each iteration to approximately maximize a decrease of the loss function in the current direction of $-\nabla F(x_j)$ via either line search \citep{Nocedal2006}, or according to the Barzilai-Borwein rule \citep{BBmethod} combined with line search.   

Unfortunately, in the noisy setting where one uses SGD for optimization, line search methods are not useful, as in this setting the stepsize should not be overfit to the noisy stochastic gradient direction at each iteration. The classical Robbins/Monro theory \citep{rm51} says that in order for $\lim_{k \rightarrow \infty} \mathbb{E}[ \|\nabla F(x_k)\|^2 ] = 0,$ the stepsize schedule should satisfy
\begin{equation}
\label{step:rm}
\sum_{k=1}^{\infty} \eta_k = \infty \quad \text{and} \quad \sum_{k=1}^{\infty} \eta_k^2 < \infty.
\end{equation}
However, these bounds do not tell us much about how to select a good stepsize schedule in practice, where algorithms are run for finite iterations and the constants in the rate of convergence matter. 

The question of how to choose the stepsize $\eta > 0$ or stepsize or learning rate schedule $\{\eta_j \}$ for SGD is by no means resolved; in practice, a preferred schedule is chosen manually by testing many different schedules in advance and choosing the one leading to smallest training or generalization error.   This process can take days or weeks, and can become prohibitively expensive in terms of time and computational resources incurred.

\subsection{Stepsize adaptation with AdaGrad-Norm}
Adaptive stochastic gradient methods such as \emph{AdaGrad} (introduced independently by \citet{duchi2011adaptive} and \citet{mcmahan2010adaptive}) have been widely used in the past few years.  AdaGrad updates the stepsize $\eta_j$ on the fly given information of all previous (noisy) gradients observed along the way.  The most common variant of AdaGrad updates an entire vector of per-coefficient stepsizes \citep{lafond-vasilache-bottou-2017}. To be concrete, for optimizing a function $F: \mathbb{R}^d \rightarrow \mathbb{R}$,  the ``coordinate'' version of AdaGrad updates $d$ scalar parameters $b_j(k), k=1,2,\ldots,d$ at the $j$ iteration --  one for each $x_j(k)$ coordinate of $x_j \in \mathbb{R}^d$ -- according to $b_{j+1}(k)^2 = b_j(k)^2 + [\nabla F(x_j)]_k^2$ in the noiseless setting, and $b_{j+1}(k)^2 = b_j(k)^2 + [G_j(k)]^2$ in the noisy gradient setting.  This  common use makes AdaGrad a variable metric method and has been the object of recent criticism for machine learning applications \citep{wilson2017marginal}.  

One can also consider a variant of AdaGrad which updates only a single (scalar) stepsize according to the sum of squared gradient norms observed so far.  In this work, we focus instead on the ``norm'' version of AdaGrad as a single stepsize adaptation method using the gradient norm information, which we call AdaGrad-Norm. The update in the stochastic setting is as follows: initialize a single scalar $b_0 > 0$; at the $j$th iteration, observe the  random variable $G_j$ such that $\mathbb{E}[G_j] = \nabla F(x_j)$ and iterate
\begin{align*}
x_{j+1} &\leftarrow x_j - \eta \frac{G(x_j)}{b_{j+1}} \quad \text{with}\quad  b_{j+1}^2= b_j ^2+ \| G(x_j) \|^2
\end{align*}
where $\eta > 0$ is to ensure homogeneity and that the units match. It is straightforward that in expectation, $\mathbb{E} [b_k^2] = b_0^2 + \sum_{j=0}^{k-1} \mathbb{E}[ \| G(x_j) \|^2];$ thus, under the assumption of uniformly bounded gradient $\|\nabla F(x)\|^2 \leq \gamma^2$ and uniformly bounded variance  $\mathbb{E}_{\xi}\left[\| G(x; \xi)-\nabla F(x)\|^2\right]\leq \sigma^2$, the stepsize will decay eventually according to $\frac{1}{b_j} \geq \frac{1}{\sqrt{2(\gamma^2+\sigma^2)j}}.$  This stepsize schedule matches the schedule which leads to optimal rates of convergence for SGD in the case of convex but not necessarily smooth functions, as well as smooth but not necessarily convex functions (see, for instance, \citet{agarwal2009information} and \citet {bubeck2015convex}). This observation suggests that AdaGrad-Norm should be able to achieve convergence rates for SGD, but \emph{without having to know Lipschitz smoothness parameter of $F$ and the parameter $\sigma$ a priori} to set the stepsize schedule.  

Theoretically rigorous convergence results for AdaGrad-Norm were provided in the convex setting recently \citep{levy2017online}. Moreover, it is possible to obtain convergence rates in the offline setting by online-batch conversion. However, making such observations rigorous for nonconvex functions is difficult because $b_j$ is itself a random variable which is correlated with the current and all previous noisy gradients; thus, the standard proofs in SGD do not straightforwardly extend to the proofs of AdaGrad-Norm. This paper provides such a proof for AdaGrad-Norm.

 \subsection{Main contributions} 
Our results make rigorous and precise the observed phenomenon that the convergence behavior of AdaGrad-Norm is \emph{highly adaptable to the unknown Lipschitz smoothness constant and level of stochastic noise on the gradient}: when there is noise, AdaGrad-Norm converges at the rate of $O(\log(N)/\sqrt{N})$, and when there is no noise, the same algorithm converges at the optimal $O(1/N)$ rate like well-tuned batch gradient descent.  Moreover, our analysis shows that AdaGrad-Norm converges at these rates for any choices of the algorithm hyperparameters $b_0 > 0$ and $\eta > 0$, in contrast to GD or SGD with fixed stepsize where if the stepsize is set above a hard upper threshold governed by the (generally unknown) smoothness constant $L$, the algorithm might not converge at all.  Finally, we note that the constants in the rates of convergence we provide are explicit in terms of their dependence on the hyperparameters $b_0$ and $\eta$.  We list our two main theorems (informally) in the following: 
 \begin{itemize}
\item
For a differentiable non-convex function $F$ with $L$-Lipschitz gradient  and $F^{*} = \inf_{x}F(x)>-\infty$, Theorem \ref{thm:improveSGDmain} implies that AdaGrad-Norm converges to an $\varepsilon$-approximate stationary point with high probability \footnote{It is becoming  common to define an $\varepsilon$-approximate stationary point as $\|\nabla F(x)\| \leq \varepsilon$ \citep{agarwal2017, carmon2018accelerated, carmon2019lower, fang2018nips, zhou2018stochastic, zhu2018natasha}, but we use the convention $\| F(x) \|^2 \leq \varepsilon$  \citep{lei2017non,  BCN16} to most easily compare our results to those from \citet{ghadimi2013stochastic,orabona18}.} at the rate   
\begin{align*}
     &\min_{\ell \in [N-1]}  \|\nabla F(x_{\ell})\|^2  \leq  \mathcal{O} \left( \frac{ \gamma (\sigma+ \eta L +  (F(x_0)-F^*)/\eta) \log(N \gamma^2/b_0^2) }{ \sqrt{N}} \right).
\end{align*}
If the optimal value of the loss function $F^{*}$ is known and one sets $\eta = F(x_0) - F^{*}$ accordingly, then the constant in our rate is close to the best-known constant $\sigma L (F(x_0)-F^*)$ achievable for SGD with fixed stepsize $\eta = \eta_1 = \dots = \eta_N =  \min \{ \frac{1}{L}, \frac{1}{\sigma \sqrt{N}} \}$ carefully tuned to knowledge of $L$ and $\sigma$, as given in \citet{ghadimi2013stochastic}.  However, our result requires bounded gradient $\| \nabla F(x) \|^2 \leq \gamma^2$ and our rate constant scales with $\gamma \sigma$ instead of linearly in $\sigma$.  Nevertheless, our result suggests a good strategy for setting hyperparameters in implementing AdaGrad-Norm practically: given knowledge of $F^{*}$, set $\eta = F(x_0) - F^{*}$ and simply initialize $b_0 > 0$ to be very small.
\item
When there is no noise $\sigma = 0$, we can improve this rate to an $\mathcal{O}\left(1/N\right)$ rate of convergence.  In Theorem \ref{thm:converge}, we show that   $\min_{j\in[N]} \| \nabla F(x_j) \|^2 \leq \varepsilon$
after \\
(1) $N =  \mathcal{O}\left( \frac{1}{\varepsilon} \left( \left({(F(x_0)- F^*)}/{\eta}\right)^2 +b_0 \left(F(x_0)- F^* \right)/\eta \right) \right)$  if $ b_0 \geq \eta  L,$ \\
(2) $N =  \mathcal{O}\left(\frac{1}{\varepsilon} \left(
L\left(F(x_{0})- F^* \right) + \left({(F(x_0)- F^*)}/{\eta}\right)^2 \right) + \frac{(\eta L)^{2}}{\varepsilon}\log\left(\frac{\eta L}{b_0}\right)\right)$  if ${b_0}< {\eta} L.$  \\
Note that the constant $(\eta L)^2$ in the second case when $ {b_0} < \eta L$  is not optimal compared to the known best rate constant $\eta L$ obtainable by gradient descent with fixed stepsize $\eta = 1/L$  \citep{carmon2019lower}; on the other hand, given knowledge of $L$ and $F(x_0) - F^{*}$, the rate constant of AdaGrad-norm reproduces the optimal constant $\eta L$ by setting $\eta = F(x_0)- F^*$ and $b_0 = \eta L$.  
 \end{itemize}
Practically, our results imply a good strategy for setting the hyperparameters when implementing AdaGrad-norm in practice: set $\eta = (F(x_0)- F^*)$ (assuming $F^{*}$ is known) and set  $b_0 > 0$ to be a very small value.  If $F^{*}$ is unknown, then setting $\eta = 1$ should work well for a wide range of values of $L$, and in the noisy case with $\sigma^2$ strictly greater than zero.

\subsection{Previous work}
Theoretical guarantees of convergence for AdaGrad were provided in \citet{duchi2011adaptive} in the setting of online convex optimization, where the loss function may change from iteration to iteration and be chosen adversarially.   AdaGrad was subsequently observed to be effective for accelerating convergence in the nonconvex setting, and has become a popular algorithm for optimization in deep learning problems.  Many modifications of AdaGrad with or without momentum have been proposed, namely,  RMSprop \citep{hinton2012neural}, AdaDelta \citep{zeiler2012adadelta}, Adam \citep{kingma2014adam}, AdaFTRL\citep{OP15}, SGD-BB\citep{tan2016barzilai}, AdaBatch \citep{defossez2017adabatch},  SC-Adagrad \citep{pmlr-v70-mukkamala17a}, AMSGRAD \citep{j.2018on}, Padam \citep{chen2018closing}, etc. Extending our convergence analysis to these popular alternative adaptive gradient methods remains an interesting problem for future research.

Regarding the convergence guarantees for the norm version of adaptive gradient methods in the offline setting, the recent work by \citet{levy2017online} introduces a family of adaptive gradient methods inspired by AdaGrad, and proves convergence rates in the setting of (strongly) convex loss functions without knowing the smoothness parameter $L$  in advance. Yet, that analysis still requires the a priori knowledge of a convex set ${\cal K}$ with known diameter $D$ in which the global minimizer resides.  
 More recently,  \citet{wwb18} provids convergence guarantees in the non-convex setting for a different adaptive gradient algorithm, WNGrad, which is closely related to AdaGrad-Norm and inspired by weight normalization \citep{salimans2016weight}.   In fact, the WNGrad stepsize update is similar to AdaGrad-Norm's: 
\begin{align*}
\text{(WNGrad)} \quad b_{j+1} &= b_j +\| \nabla F(x_j) \|/b_j;\\
\text{(AdaGrad-Norm)} \quad b_{j+1} &= b_j +\| \nabla F(x_j) \|/(b_j+b_{j+1}).
\end{align*}
However, the  guaranteed convergence in \citet{wwb18}  is only for the batch setting and the constant in the convergence rate is worse than the one provided here for AdaGrad-Norm. Independently, \citet{orabona18} also proves  the $O(1/\sqrt{N})$ convergence rate for a variant of   AdaGrad-Norm in the non-convex stochastic setting, but their analysis requires knowledge of of smoothness constant $L$ and a hard threshold of $b_0 >\eta L$  for their convergence.   In contrast to \citet{orabona18}, we do not require knowledge of the Lipschitz smoothness constant $L$, but we do assume that the gradient $\nabla F$ is uniformly bounded by some (unknown) finite value, while \cite{orabona18} only assumes bounded variance $\mathbb{E}_{\xi}\left[\| G(x; \xi)-\nabla F(x)\|^2\right]\leq \sigma^2.$

\subsection{Future work}
This paper provides convergence guarantees for AdaGrad-Norm over smooth, nonconvex functions, in both the stochastic and deterministic settings.  Our theorems should shed light on the popularity of AdaGrad as a method for more robust convergence of SGD in nonconvex optimization in that the convergence guarantees we provide are robust to the initial stepsize $\eta /b_0$, and adjust automatically to the level of stochastic noise.   Moreover, our results suggest a good strategy for setting hyperparameters in AdaGrad-Norm implementation: set $\eta = (F(x_0)- F^*)$ (if $F^{*}$ is known) and set  $b_0 > 0$ to be a very small value. However, several improvements and extensions should be possible.   First, the constant in the convergence rate we present can likely be improved and it remains open whether we can remove the assumption of the uniformly bounded gradient in the stochastic setting. It would be interesting to analyze AdaGrad in its coordinate form, where each coordinate $x(k)$ of $x \in \mathbb{R}^d$ has its own stepsize $\frac{1}{b_j(k)}$ which is updated according to $b_{j+1}(k)^2 = b_j(k)^2 + [\nabla F(x_j)]_k^2$. AdaGrad is just one particular adaptive stepsize method and other updates such as Adam \citep{kingma2014adam} are often preferable in practice; it would be nice to have similar theorems for other adaptive gradient methods, and to even use the theory as a guide for determining the ``best'' method for adapting the stepsize for given problem classes.

\subsection{Notation}
Throughout, $\| \cdot \|$ denotes the $\ell_2$ norm.  We use the notation $[N] := \{ 0,1,2, \dots, N\}$.  A function $F:\mathbb{R}^d \rightarrow \mathbb{R}$ has $L$-Lipschitz smooth gradient if
\begin{equation}
\| \nabla F(x) - \nabla F(y) \| \leq L \| x - y \|, \quad \forall x, y \in \mathbb{R}^d
\label{L-smooth}
\end{equation}
We write $F \in \mathbb{C}_L^1$ and refer to $L$ as the smoothness constant for $F$ if $L > 0$ is the smallest number such that the above is satisfied.

\section{AdaGrad-Norm convergence }
\label{sec:cov}
To be clear about the adaptive algorithm, we first state in Algorithm \ref{alg:Adagrad} the norm version of  AdaGrad we consider throughout in the analysis. 
\begin{algorithm}[H]
  \caption{AdaGrad-Norm} \label{alg:Adagrad}
\begin{algorithmic}[1]
  \State {\bfseries Input:} 
   Initialize $x_0 \in \mathbb{R}^d, b_{0}>0, \eta>0$
    \For{\texttt{ $j = 1,2, \ldots$ }}
     \State Generate  $\xi_{j-1}$ and $G_{j-1} = G(x_{j-1}, \xi_{j-1})$
      \State${b}_{j}^2 \leftarrow  {b}_{j-1} ^2+ { \| G_{j-1} \|^2}$  
       \State $x_{j} \leftarrow x_{j-1} -  \frac{\eta}{{b}_{j}}G_{j-1} $
       \EndFor
\end{algorithmic}
  \end{algorithm}

At the $k$th iteration, we observe a \emph{stochastic gradient}  $G(x_k, \xi_k),$ where $\xi_k$, $k = 0,1,2 \dots$ are random variables, and such that   $G(x_k, \xi_k)$ is an unbiased estimator of $\nabla F(x_k)$.\footnote{$\mathbb{E}_{\xi_k} \left[G(x_k, \xi_k) \right]= \nabla F(x_k)$ where $\mathbb{E}_{\xi_k} \left[\cdot \right]$ is the expectation with respect $\xi_k$ conditional on previous $\xi_0, \xi_1,\ldots, \xi_{k-1}$}
We require the following additional assumptions: for each $k \geq 0$,
\begin{enumerate}
\item The random vectors $\xi_k$, $k = 0,1,2, \dots, $ are independent of each other and also of $x_k$;
\item $\mathbb{E}_{\xi_k}[ \| G(x_k, \xi_k) -\nabla F(x_k) \|^2] \leq \sigma^2;$
\item $\| \nabla F(x) \|^2 \leq \gamma^2$ uniformly.
\end{enumerate}
The first two assumptions are standard (see e.g. \citet{ NY09, NJG83, BCN16}).  The third assumption is somewhat restrictive as it rules out strongly convex objectives, but is not an unreasonable assumption for AdaGrad-Norm, where the adaptive learning rate is a cumulative sum of all previous observed gradient norms.

Because of the variance in gradient, the AdaGrad-Norm stepsize $\frac{\eta}{b_k}$ decreases to zero roughly at a rate between $ \frac{1}{ \sqrt{2(\gamma^2+\sigma^2)k}}$ and $ \frac{1}{ \sigma\sqrt{k}}$.  It is known that AdaGrad-Norm stepsize decreases at this rate \citep{levy2017online}, and that this rate is optimal in $k$ in terms of the resulting convergence theorems in the setting of smooth but not necessarily convex $F$, or convex but not necessarily strongly convex or smooth $F$.  Still, standard convergence theorems for SGD do not extend straightforwardly to AdaGrad-Norm because the stepsize $1/b_k$ is a random variable and dependent on all previous points visited along the way, i.e., $\{\|\nabla F(x_j)\|\}_{j=0}^{k}$ and $\{\|\nabla G(x_j,\xi_{j})\|\}_{j=0}^{k}$.  From this point on, we use the shorthand $G_k = G(x_k, \xi_k)$ and $F_k = \nabla F(x_k)$ for simplicity of notation.  
The following theorem gives the convergence guarantee to Algorithm 1. We give detailed proof in Section \ref{proof_whole}.

\begin{thm}[AdaGrad-Norm: convergence in stochastic setting]
\label{thm:improveSGDmain}
Suppose $F \in \mathbb{C}_L^1$ and $F^{*} = \inf_{x}F(x)>-\infty$.  Suppose that the random variables $G_{\ell}, \ell \geq 0$, satisfy the above assumptions.  Then  with probability $1- \delta$,
\begin{align*}
      &\min_{\ell \in [N-1]}  \|\nabla F(x_{\ell})\|^2\leq \min\bigg\{   \left( \frac{2b_0}{N} + \frac{4(\gamma+\sigma)}{\sqrt{N}}\right)\frac{{\cal Q}}{ \delta^{3/2}},   \left( \frac{8{\cal Q}}{\delta} +  2b_0 \right)\frac{4{\cal Q}}{ N \delta }+ \frac{8{\cal Q}\sigma}{ \delta^{3/2} \sqrt{N}}  \bigg\}
\end{align*}
where    
$${\cal Q} = \frac{ F(x_{0}) - F^{*} }{\eta}+ \frac{4\sigma+\eta L}{2}\log \left(\frac{20N(\gamma^2+\sigma^2)}{b_0^2}+10\right) 
. $$
\end{thm}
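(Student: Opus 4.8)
The plan is to run a descent-lemma argument, but to take special care in decoupling the adaptive denominator $b_{j+1}$ from the very gradient $G_j$ that defines it through $b_{j+1}^2 = b_j^2 + \|G_j\|^2$. First I would apply $L$-smoothness to consecutive iterates; since $x_{j+1}-x_j = -\frac{\eta}{b_{j+1}}G_j$, this gives
\[
F(x_{j+1}) \le F(x_j) - \frac{\eta}{b_{j+1}}\langle F_j, G_j\rangle + \frac{\eta^2 L}{2 b_{j+1}^2}\|G_j\|^2 .
\]
Splitting $\langle F_j, G_j\rangle = \|F_j\|^2 + \langle F_j, G_j - F_j\rangle$, summing over $j = 0,\dots,N-1$, and using $F(x_N)\ge F^*$ yields
\[
\eta\sum_{j=0}^{N-1}\frac{\|F_j\|^2}{b_{j+1}} \le F(x_0)-F^* - \eta\sum_{j=0}^{N-1}\frac{\langle F_j, G_j - F_j\rangle}{b_{j+1}} + \frac{\eta^2 L}{2}\sum_{j=0}^{N-1}\frac{\|G_j\|^2}{b_{j+1}^2}.
\]
The final sum is controlled deterministically: writing $\|G_j\|^2 = b_{j+1}^2 - b_j^2$ and comparing with the integral of $1/t$, one gets $\sum_{j}\frac{b_{j+1}^2 - b_j^2}{b_{j+1}^2} \le \log(b_N^2/b_0^2)$, which after dividing by $\eta$ accounts for the $\frac{\eta L}{2}\log(\cdot)$ term inside $\mathcal{Q}$.

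\textbf{The crux} is the stochastic inner-product sum, which is \emph{not} a martingale because $b_{j+1}$ depends on $G_j$. I would decouple it by writing $\frac{1}{b_{j+1}} = \frac{1}{b_j} - \big(\frac{1}{b_j} - \frac{1}{b_{j+1}}\big)$, splitting the term into a genuine martingale-difference part $\frac{\langle F_j, G_j - F_j\rangle}{b_j}$ (mean zero conditional on the past, since $b_j$ is measurable with respect to $\xi_0,\dots,\xi_{j-1}$) and a correction $\langle F_j, G_j - F_j\rangle\big(\frac{1}{b_j}-\frac{1}{b_{j+1}}\big)$. Using $\|F_j\| \le \gamma$, the identity $\frac{1}{b_j}-\frac{1}{b_{j+1}} = \frac{\|G_j\|^2}{b_j b_{j+1}(b_j + b_{j+1})}$, and the self-normalization $\|G_j\| \le b_{j+1}$, the correction is bounded pathwise; careful bookkeeping (the delicate part) shows that this correction together with the residual martingale contributes at order $\sigma\log(b_N^2/b_0^2)$, the source of the $\frac{4\sigma}{2}\log(\cdot)$ term in $\mathcal{Q}$. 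I expect this step — keeping the correction and the residual martingale small enough that the final rate still decays in $N$ rather than settling to a constant — to be the principal obstacle, and it is exactly the coupling the authors flag as the reason standard SGD proofs do not transfer. The residual martingale itself I would control either by a Markov bound on its (orthogonal-increment) second moment, using $\mathbb{E}_{\xi_j}\|G_j-F_j\|^2\le\sigma^2$, or by a self-normalized concentration that exploits the growth of $b_j$.

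To extract the stated bound I would lower-bound the left side using $b_{j+1}\le b_N$, giving $\sum_j \frac{\|F_j\|^2}{b_{j+1}} \ge \frac{1}{b_N}\sum_j\|F_j\|^2 \ge \frac{N}{b_N}\min_{\ell\in[N-1]}\|F_\ell\|^2$, so that pathwise $\min_{\ell\in[N-1]}\|F_\ell\|^2 \le \frac{b_N}{N}\,\mathcal{Q}_N$, where $\mathcal{Q}_N$ collects $\frac{F(x_0)-F^*}{\eta}$, the logarithmic terms, and the residual martingale. It then remains to control $b_N$ in high probability, and I see two routes, matching the two branches of the outer $\min$. The direct route uses $\mathbb{E}[b_N^2]\le b_0^2 + N(\gamma^2+\sigma^2)$ and Markov to get $b_N\lesssim \big(b_0+\sqrt{N(\gamma^2+\sigma^2)}\big)/\sqrt{\delta}$, whence $\frac{b_N}{N}\lesssim \frac{b_0}{N\sqrt\delta}+\frac{\gamma+\sigma}{\sqrt{N\delta}}$; combined with a further Markov estimate controlling $\mathcal{Q}_N$ (a second $\delta^{-1}$) this yields the first branch, with its $\delta^{3/2}$ denominator. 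The sharper route exploits self-reference: since $\sum_j\|F_j\|^2\le b_N\mathcal{Q}_N$ while $b_N^2 = b_0^2+\sum_j\|G_j\|^2$ and $\sum_j\|G_j\|^2$ concentrates around $\sum_j\|F_j\|^2 + N\sigma^2$, one obtains a quadratic inequality in $b_N$ whose solution is $b_N\lesssim \mathcal{Q}_N + b_0 + \sigma\sqrt N$; substituting back gives $\min_\ell\|F_\ell\|^2\lesssim \frac{\mathcal{Q}_N^2 + b_0\mathcal{Q}_N}{N} + \frac{\sigma\mathcal{Q}_N}{\sqrt N}$, which is the second branch. Finally, the logarithm of the random $b_N$ is itself bounded by substituting the high-probability estimate for $b_N$, turning $\log(b_N^2/b_0^2)$ into the explicit $\log\!\big(20N(\gamma^2+\sigma^2)/b_0^2 + 10\big)$ appearing in $\mathcal{Q}$, and taking the smaller of the two branches gives the theorem.
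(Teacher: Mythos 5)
Your skeleton --- descent lemma, a past-measurable surrogate for $1/b_{j+1}$, the telescoping-logarithm bound on $\sum_j \|G_j\|^2/b_{j+1}^2$, and the two endgame branches (Markov control of $b_N$ for the $\delta^{-3/2}$ branch, a self-bounding quadratic in $Z=\sum_j\|\nabla F_j\|^2$ for the other) --- matches the paper's, and your second branch is essentially identical to the paper's argument. But the step you yourself flag as ``the principal obstacle'' is a genuine gap, and the particular surrogate you choose makes it harder than it needs to be. Decoupling via $\frac{1}{b_{j+1}}=\frac{1}{b_j}-\bigl(\frac{1}{b_j}-\frac{1}{b_{j+1}}\bigr)$ leaves a correction $\langle \nabla F_j, G_j-\nabla F_j\rangle\,\frac{\|G_j\|^2}{b_j b_{j+1}(b_j+b_{j+1})}$ whose denominator involves $b_j$ and not only $b_{j+1}$. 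Lemma \ref{lem:logsum} controls sums of the form $\sum_j\|G_j\|^2/b_{j+1}^2$ but not sums with $b_j$ in the denominator, and when $b_0$ is small the extra factor $b_{j+1}/b_j$ is unbounded; so the ingredients you list ($\|\nabla F_j\|\le\gamma$, $\|G_j\|\le b_{j+1}$, bounded variance) do not deliver the asserted $O(\sigma\log(b_N^2/b_0^2))$ bound. The crude alternative $\gamma\|G_j-\nabla F_j\|\bigl(\frac{1}{b_j}-\frac{1}{b_{j+1}}\bigr)$ telescopes only under almost-surely bounded noise, and even then yields a constant of order $\gamma\sigma/b_0$ that does not vanish after multiplying by $b_N/N\sim 1/\sqrt N$ --- exactly the ``settles to a constant'' failure mode you worry about.

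The paper resolves precisely this point by taking the surrogate $\frac{1}{\sqrt{b_j^2+\|\nabla F_j\|^2+\sigma^2}}$ instead of $\frac{1}{b_j}$. It is still measurable with respect to $\xi_0,\dots,\xi_{j-1}$, so the cross term has zero conditional expectation and there is no residual martingale to control pathwise (the whole argument stays in expectation until a single Markov/H\"older step at the end); and its distance to $\frac{1}{b_{j+1}}$ has numerator $(\|G_j\|-\|\nabla F_j\|)(\|G_j\|+\|\nabla F_j\|)-\sigma^2$, which is governed by $\|G_j-\nabla F_j\|$, whose conditional second moment is at most $\sigma^2$. Two applications of Young's inequality with weights proportional to $1/\sqrt{b_j^2+\|\nabla F_j\|^2+\sigma^2}$ then split the error into a piece absorbed by half of the descent term and a piece $2\sigma\|G_j\|^2/b_{j+1}^2$ that the log-sum lemma handles --- this is the origin of the $4\sigma$ in ${\cal Q}$. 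To complete your proof you would need to replace $1/b_j$ by this surrogate (or by $1/\sqrt{b_j^2+\mathbb{E}_{\xi_j}\|G_j\|^2}$); as written, the correction term does not close.
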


This result implies that AdaGrad-Norm converges for any $\eta > 0$ and starting from any value of $b_0 > 0$.  To put this result in context, we can compare to Corollary 2.2 of \citet{ghadimi2013stochastic} giving the best-known convergence rate for SGD with fixed step-size in the same setting (albeit not requiring Assumption (3) of uniformly bounded gradient):  if the Lipschitz  smoothness constant $L$ and the variance $\sigma^2$ are known a priori, and the fixed stepsize in SGD is set to 
$$
\eta= \min \left \{ \frac{1}{L}, \frac{1}{\sigma \sqrt{N}} \right \}, \quad j = 0,1,\dots, N-1, 
$$
then with probability $1-\delta$
$$
\min_{\ell \in [N-1]} \|\nabla F(x_{\ell})\|^2 \leq \frac{2L (F(x_0) - F^{*})}{ N\delta} +  \frac{(L + 2(F(x_0) - F^{*}))\sigma}{\delta\sqrt{N}}.
$$
We match the $O(1/\sqrt{N})$ rate of \citet{ghadimi2013stochastic}, but without a priori knowledge of $L$ and $\sigma$, and with a worse constant in the rate of convergence.  In particular, the constant in our bound  {scales according to $\sigma^2$} or $\gamma \sigma$ (up to logarithmic factors in $\sigma$) while the result for SGD with well-tuned fixed step-size scales linearly with $\sigma$.   The additional logarithmic factor  (by Lemma \ref{lem:logsum}) results from the AdaGrad-Norm update using the square norm of the gradient (see inequality \eqref{eq:logsum} for details).  The extra constant $\frac{1}{\sqrt{\delta}}$ results from the correlation between the stepsize $b_j$ and the gradient $\|\nabla F(x_j)\|$. We note that the recent work \cite{orabona18} derives an $O(1/\sqrt{N})$ rate for a variation of AdaGrad-Norm without the assumption of uniformly bounded gradient,  but at the same time requires a priori knowledge of the smoothness constant $L > 0$ in setting the step-size in order to establish convergence, similar to SGD with fixed stepsize.   Finally, we note that recent works  \citep{allen2017natasha,lei2017non,fang2018nips,zhou2018stochastic} provide modified SGD algorithms with convergence rates faster than $O(1/\sqrt{N})$, albeit again requiring priori knowledge of both $L$ and $\sigma$ to establish convergence.

We reiterate however that the main emphasis in Theorem \ref{thm:improveSGDmain} is on the robustness of the AdaGrad-Norm convergence to its hyperparameters $\eta$ and $b_0$, compared to plain SGD's dependence on its parameters $\eta$ and $\sigma$.  Although the constant in the rate of our theorem is not as good as the best-known constant for stochastic gradient descent with well-tuned fixed stepsize, our result suggests that implementing AdaGrad-Norm allows one to vastly reduce the need to perform laborious experiments to find a stepsize schedule with reasonable convergence when implementing SGD in practice.

We note that for the second bound in \ref{thm:improveSGDmain}, in the limit as $\sigma \rightarrow 0$ we recover an $O\left(\log(N)/N\right)$ rate of convergence for noiseless gradient descent.  We can establish a stronger result in the noiseless setting using a different method of proof, removing the additional log factor and Assumption 3 of uniformly bounded gradient.   We state the theorem below and defer our proof to Section \ref{sec:batch}. 
\begin{thm}[AdaGrad-Norm: convergence in deterministic setting]
\label{thm:converge}
Suppose that $F \in \mathbb{C}_L^1$ and  that $F^{*}= \inf_{x} F(x) > -\infty.$  Consider AdaGrad-Norm in deterministic setting with following update,
\begin{align*}
x_{j} = x_{j-1} -  \frac{\eta}{ {b}_{j}}\nabla F(x_{j-1}) \quad  \text{ with } \quad {b}_{j}^2 = {b}_{j-1} ^2+ { \| \nabla F(x_{j-1}) \|^2} 
\end{align*}
Then 
$
\min_{j\in[N]} \| \nabla F(x_j) \|^2 \leq \varepsilon
$ after
\begin{itemize}
\item[(1)]$N = 1+\lceil { \frac{1}{\varepsilon} \left( \frac{4 \left(F(x_0)- F^* \right)^2}{\eta^2} +\frac{2 b_0\left(F(x_0)- F^* \right)}{\eta} \right) \rceil}$  if $ {b_0}\geq \eta L,$ 
\item[(2)]$N =1+\lceil {\frac{1}{\varepsilon}  \left(2L \left( F(x_{0})- F^* \right) + \left( \frac{2 \left( F(x_{0})- F^* \right)}{\eta}+ \eta L C_{b_0} \right)^2+ (\eta L)^2 ( 1+C_{b_0} ) -b_0^2\right)
\rceil}$ \\ $\quad \quad \text{if }{b_0} <  \eta L.$ Here $C_{b_0} =  1+ 2\log \left(\frac{ \eta L}{b_0} \right)$.
\end{itemize}
 
\end{thm}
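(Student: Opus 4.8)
The plan is to run the $L$-smoothness descent inequality against the AdaGrad-Norm update and exploit the defining identity $\|\nabla F(x_{j-1})\|^2 = b_j^2 - b_{j-1}^2$. Plugging $x_j - x_{j-1} = -\tfrac{\eta}{b_j}\nabla F(x_{j-1})$ into $F(x_j) \le F(x_{j-1}) + \langle \nabla F(x_{j-1}), x_j - x_{j-1}\rangle + \tfrac{L}{2}\|x_j - x_{j-1}\|^2$ gives $F(x_j) \le F(x_{j-1}) - \tfrac{\eta}{b_j}\|\nabla F(x_{j-1})\|^2\bigl(1 - \tfrac{\eta L}{2 b_j}\bigr)$. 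The crucial dichotomy is whether $b_j \ge \eta L$: when it holds, the parenthesized factor is at least $\tfrac12$, so the step decreases $F$ by at least $\tfrac{\eta}{2 b_j}\|\nabla F(x_{j-1})\|^2$; when it fails, $F$ may increase. I will also use two elementary facts: the telescoping lower bound $\sum_{j=1}^{N}\tfrac{\|\nabla F(x_{j-1})\|^2}{b_j} \ge \tfrac{1}{b_N}\sum_{j=1}^N\|\nabla F(x_{j-1})\|^2 = \tfrac{b_N^2 - b_0^2}{b_N}$ (because $b_j \le b_N$), and the logarithmic bound $\tfrac{b_j^2 - b_{j-1}^2}{b_j^2} = 1 - \tfrac{b_{j-1}^2}{b_j^2} \le \log\tfrac{b_j^2}{b_{j-1}^2}$ of Lemma~\ref{lem:logsum}. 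Writing $A = F(x_0) - F^*$, the conclusion always follows from $\min_{j\in[N]}\|\nabla F(x_j)\|^2 \le \tfrac{1}{N}\sum_{j=0}^{N-1}\|\nabla F(x_j)\|^2 = \tfrac{b_N^2 - b_0^2}{N}$, so the whole task reduces to an a~priori upper bound on $b_N$.

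For Case~(1), $b_0 \ge \eta L$, monotonicity of $b_j$ makes every iteration ``safe,'' so telescoping the half-descent inequality from $j=1$ to $N$ yields $\tfrac{\eta}{2}\cdot\tfrac{b_N^2 - b_0^2}{b_N} \le \tfrac{\eta}{2}\sum_{j=1}^N\tfrac{\|\nabla F(x_{j-1})\|^2}{b_j} \le A$. Reading this as the quadratic inequality $b_N^2 - \tfrac{2A}{\eta} b_N - b_0^2 \le 0$ and solving gives $b_N \le \tfrac{2A}{\eta} + b_0$, hence $b_N^2 - b_0^2 \le \tfrac{2A}{\eta}b_N \le \tfrac{4A^2}{\eta^2} + \tfrac{2Ab_0}{\eta}$; dividing by $N$ reproduces the stated iteration count.

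The substance is Case~(2), $b_0 < \eta L$, where I split the trajectory at the first index $j^*$ with $b_{j^*} \ge \eta L$. On the unsafe phase $1 \le j \le j^*-1$ the step need not decrease $F$, but I can still bound the total increase: dropping the negative first-order term leaves $F(x_j) - F(x_{j-1}) \le \tfrac{L\eta^2}{2}\tfrac{b_j^2 - b_{j-1}^2}{b_j^2}$, and summing with the logarithmic bound gives $F(x_{j^*-1}) - F(x_0) \le \tfrac{L\eta^2}{2}\log\tfrac{b_{j^*-1}^2}{b_0^2}$. The point is that this is \emph{capped}: since $b_{j^*-1} < \eta L$ by minimality of $j^*$, the increase is at most $L\eta^2\log\tfrac{\eta L}{b_0}$, which is the source of the $C_{b_0}$ factor. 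On the remaining safe phase $j \ge j^*$ I telescope the half-descent inequality as in Case~(1) but starting from $x_{j^*-1}$, obtaining $\tfrac{\eta}{2}\cdot\tfrac{b_N^2 - b_{j^*-1}^2}{b_N} \le F(x_{j^*-1}) - F^* \le A + L\eta^2\log\tfrac{\eta L}{b_0}$. Solving the resulting quadratic for $b_N$ (using $b_{j^*-1} < \eta L$ to bound the constant term) produces a bound of the form $b_N \lesssim \tfrac{2A}{\eta} + \eta L\,C_{b_0}$, and squaring and dividing by $N$ yields the claimed count; the extra $2LA$ and $(\eta L)^2(1+C_{b_0})$ terms arise from the slack introduced in controlling the transition iterate $b_{j^*}$ and in the quadratic solve.

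The main obstacle is precisely the unsafe phase in Case~(2): without the safe-step guarantee one cannot assume monotone decrease, so the argument must (i) certify that the phase terminates and (ii) bound the cumulative increase of $F$ over it. Both are handled by the observation that $b_j$ is strictly increasing along steps with nonzero gradient and stays below $\eta L$ throughout the phase, which simultaneously caps the log-increase and supplies the clean initial condition $b_{j^*-1} < \eta L$ for the safe-phase telescoping. The remaining care is bookkeeping: tracking the single transition iteration $j^*$, where the step is already safe but $b_{j^*}$ can jump, and choosing the elementary inequalities loosely enough (notably $\sqrt{u+v}\le\sqrt u+\sqrt v$ in the quadratic solve) to obtain closed-form constants.
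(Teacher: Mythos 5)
Your proposal is correct and follows essentially the same route as the paper: the descent lemma with the dichotomy at $b_j \geq \eta L$, the logarithmic bound on the cumulative increase of $F$ during the phase where $b_j < \eta L$ (the paper's Lemma~\ref{lem:stabilize1}, proved via Lemma~\ref{lem:logsum}), the half-descent telescoping once $b_j \geq \eta L$, and the quadratic solve for $\sum_k \|\nabla F_k\|^2$ (equivalently for $b_N$) -- with the termination of the unsafe phase handled as in the paper's Lemma~\ref{lem:increase} by the identity $b_N^2-b_0^2=\sum_{j<N}\|\nabla F(x_j)\|^2$. The only differences are cosmetic (you solve the quadratic in $b_N$ rather than in the gradient sum, and use $1-x\leq -\log x$ directly in place of Lemma~\ref{lem:logsum}), so the constants you would obtain match the stated ones.
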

The convergence bound shows that, unlike gradient descent with constant stepsize $\eta$ which can diverge if the stepsize $\eta \geq 2/L$, AdaGrad-Norm convergence holds for any choice of parameters $b_0$ and $\eta$.   The critical observation is that if the initial stepsize $\frac{\eta}{b_0}> \frac{1}{L}$ is too large,  the algorithm has the freedom to diverge initially, until $b_j$ grows to a critical point (not too much larger than $L \eta$) at which point $\frac{\eta}{b_j}$ is sufficiently small that the smoothness of $F$ forces $b_j$ to converge to a finite number on the order of $L$, so that the algorithm converges at an $O(1/N)$ rate. To describe the result in Theorem~\ref{thm:converge}, let us first review a classical result (see, for example \cite{nesterov1998introductory}, $(1.2.13)$) on the convergence rate for gradient descent with fixed stepsize.
\begin{lem}
\label{lem:classical}
Suppose that $F \in \mathbb{C}_L^1$ and that $F^{*}= \inf_{x} F(x) > -\infty$.  
Consider gradient descent with constant stepsize, $x_{j+1} = x_j - \frac{\nabla F(x_j)}{b}$.  
If $b \geq L$, then 
$
\min_{j\in [N-1]} \| \nabla F(x_j) \|^2 \leq \varepsilon
$
after at most a number of steps 
$$
N =\frac{2 b (F(x_0)-F^{*})}{\varepsilon}.
$$
Alternatively, if $b \leq \frac{L}{2}$, then convergence is not guaranteed at all -- gradient descent can oscillate or diverge.
\end{lem}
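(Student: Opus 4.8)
The plan is to derive everything from the standard descent inequality implied by $L$-smoothness. Since $F \in \mathbb{C}_L^1$, for any $x, y$ one has $F(y) \le F(x) + \langle \nabla F(x), y - x\rangle + \frac{L}{2}\|y - x\|^2$; this follows by integrating $\nabla F$ along the segment from $x$ to $y$ and applying \eqref{L-smooth}. I would apply this inequality to the gradient-descent update $y = x_{j+1} = x_j - \frac{1}{b}\nabla F(x_j)$ with $x = x_j$, which gives
$$F(x_{j+1}) \le F(x_j) - \frac{1}{b}\Big(1 - \frac{L}{2b}\Big)\|\nabla F(x_j)\|^2.$$

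When $b \ge L$ the factor $1 - \frac{L}{2b} \ge \tfrac12$, so each step decreases $F$ by at least $\frac{1}{2b}\|\nabla F(x_j)\|^2$. Summing (telescoping) this bound over $j = 0, \dots, N-1$ and using $F(x_N) \ge F^*$ yields $\sum_{j=0}^{N-1}\|\nabla F(x_j)\|^2 \le 2b(F(x_0) - F^*)$. Bounding the minimum by the average, $\min_{j \in [N-1]}\|\nabla F(x_j)\|^2 \le \frac{1}{N}\sum_{j=0}^{N-1}\|\nabla F(x_j)\|^2 \le \frac{2b(F(x_0)-F^*)}{N}$, and setting the right-hand side equal to $\varepsilon$ gives the claimed iteration count $N = 2b(F(x_0)-F^*)/\varepsilon$.

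For the second (divergence) claim I would exhibit an explicit one-dimensional quadratic $F(x) = \frac{L}{2}x^2$, which is exactly $L$-smooth with minimizer $x = 0$. The update becomes $x_{j+1} = (1 - L/b)x_j$, so $|x_j| = |1 - L/b|^{j}\,|x_0|$. When $b \le L/2$ we have $L/b \ge 2$ and hence $|1 - L/b| \ge 1$, so the iterates never contract toward the minimizer: they oscillate (at $b = L/2$) or diverge geometrically (at $b < L/2$) for any $x_0 \ne 0$, showing no convergence guarantee is possible in this regime.

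The computation is entirely routine; the only point requiring care is the coefficient bookkeeping that produces the $\tfrac12$ (and hence the constant $2b$ rather than $b$) in the convergence regime, and matching it against the $b \le L/2$ threshold in the divergence example so that the two halves of the statement are seen to be essentially tight against the same quadratic test function.
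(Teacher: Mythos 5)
Your argument is correct and is exactly the classical proof the paper points to (it cites Nesterov's $(1.2.13)$ rather than reproving the lemma): the descent lemma applied to the update, the $1-\tfrac{L}{2b}\ge\tfrac12$ bookkeeping when $b\ge L$, telescoping plus min-over-average for the iteration count, and the quadratic $F(x)=\tfrac{L}{2}x^2$ for the oscillation/divergence regime $b\le L/2$. Nothing to add.
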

Compared to the convergence rate of gradient descent with fixed stepsize, AdaGrad-Norm in the case $b = b_0 \geq \eta L$ gives a larger constant in the rate.  But in case $b = b_0 <  \eta L$, gradient descent can fail to converge as soon as $b \leq  \eta L/2$, while AdaGrad-Norm converges for any $b_0 > 0$, and is extremely robust to the choice of $b_0 < \eta L$ in the sense that the resulting convergence rate remains close to the optimal rate of gradient descent with fixed stepsize $1/b = 1/  L$, paying a factor of $\log(\frac{ \eta L}{b_0})$ and $(\eta L)^2$ in the constant. Here, the constant  $(\eta L)^2$ results from the worst-cast analysis using Lemma \ref{lem:increase}, which assumes that the gradient $\| \nabla F(x_j)\|^2\approx \varepsilon$ for all $j= 0,1,\ldots $, when in reality the gradient should be much larger at first.  We believe the number of iterations can be improved by a refined analysis, or by considering the setting where $x_0$ is drawn from an appropriate random distribution.

\section{Proof of Theorem \ref{thm:improveSGDmain}}
\label{sec:stochastic}

\label{proof_whole}
We first introduce some important lemmas in subsection \ref{Ingred} and give the main proof of Theorem 2.1 in Subsection 3.2.
\subsection{Ingredients}
\label{Ingred}
We first introduce several lemmas that are used in the proof for  Theorem \ref{thm:improveSGDmain}. 
We repeatedly appeal to the following classical Descent Lemma, which is also the main ingredient in \citet{ghadimi2013stochastic}, and can be proved by considering the Taylor expansion of $F$ around $y$.
\begin{lem}[Descent Lemma]
\label{lem:descend}
Let $F\in C_L^1$.  Then,
$$
F(x) \leq F(y) + \langle{\nabla F(y), x-y \rangle} + \frac{L}{2} \| x - y \|^2.
$$
\end{lem}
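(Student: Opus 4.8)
The plan is to reduce the multivariate inequality to a one-dimensional statement along the segment joining $y$ to $x$, and then to exploit the $L$-Lipschitz bound on the gradient inside an integral. First I would introduce the auxiliary function $\phi(t) = F(y + t(x-y))$ for $t \in [0,1]$, which is differentiable because $F \in C_L^1$, and note that $\phi(0) = F(y)$ and $\phi(1) = F(x)$. By the chain rule, $\phi'(t) = \langle \nabla F(y + t(x-y)), x - y \rangle$, so the fundamental theorem of calculus gives
\begin{equation*}
F(x) - F(y) = \int_0^1 \langle \nabla F(y + t(x-y)), x-y \rangle \, dt.
\end{equation*}

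The next step is to isolate the linear term $\langle \nabla F(y), x-y \rangle$, which equals $\int_0^1 \langle \nabla F(y), x-y \rangle \, dt$ since the integrand is constant in $t$. Subtracting, I obtain
\begin{equation*}
F(x) - F(y) - \langle \nabla F(y), x-y \rangle = \int_0^1 \langle \nabla F(y + t(x-y)) - \nabla F(y), \, x-y \rangle \, dt.
\end{equation*}
I would then bound the integrand pointwise using Cauchy--Schwarz followed by the $L$-smoothness assumption \eqref{L-smooth}: since $\| \nabla F(y + t(x-y)) - \nabla F(y) \| \leq L \| t(x-y) \| = L t \| x - y \|$, the integrand is at most $L t \| x - y \|^2$.

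Integrating the bound $\int_0^1 L t \, \| x - y \|^2 \, dt = \tfrac{L}{2} \| x - y \|^2$ then yields the claimed inequality. I do not anticipate a genuine obstacle here, as this is the classical derivation; the only point requiring care is keeping the direction of the Cauchy--Schwarz inequality consistent so that the one-sided upper bound (rather than a two-sided estimate on the absolute value) is what survives, which is exactly what the asymmetric statement of the lemma needs. A secondary point worth stating explicitly is the measurability/continuity of $t \mapsto \nabla F(y + t(x-y))$, which follows immediately from the Lipschitz continuity of $\nabla F$ and justifies both the fundamental theorem of calculus and the interchange of the bound with the integral.
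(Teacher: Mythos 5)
Your proof is correct and is exactly the argument the paper has in mind: the paper dispatches this lemma with the remark that it ``can be proved by considering the Taylor expansion of $F$ around $y$,'' and your integral-remainder derivation via $\phi(t)=F(y+t(x-y))$, Cauchy--Schwarz, and the Lipschitz bound \eqref{L-smooth} is the standard way to make that precise. No gaps.
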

We will also use the following lemmas concerning sums of non-negative sequences. 

\begin{lem}
\label{lem:logsum}
 For any non-negative $a_1,\cdots, a_T$, and $a_1 \geq 1$, we have
\begin{equation} 
\label{eq:rachel}
\sum_{\ell=1}^T \frac{  a_{\ell}}{ {\sum_{i=1}^{\ell}a_{i}} }\leq  \log \left({\sum_{i=1}^{T}a_{i}}\right)+1.
\end{equation}
\end{lem}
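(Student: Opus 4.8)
The plan is to rewrite the sum in terms of the partial sums $s_\ell := \sum_{i=1}^{\ell} a_i$ and to compare each summand against a logarithmic increment, which then telescopes. Writing $a_\ell = s_\ell - s_{\ell-1}$ (with the convention $s_0 = 0$), each term becomes $\frac{a_\ell}{s_\ell} = \frac{s_\ell - s_{\ell-1}}{s_\ell}$. The elementary inequality $\log t \leq t - 1$, equivalently $1 - \frac{u}{v} \leq \log\frac{v}{u}$ for $0 < u \leq v$, yields the bound $\frac{s_\ell - s_{\ell-1}}{s_\ell} \leq \log(s_\ell) - \log(s_{\ell-1})$, valid whenever $s_{\ell-1} > 0$. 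This is the discrete analogue of comparing $\frac{a_\ell}{s_\ell}$ to $\int_{s_{\ell-1}}^{s_\ell} \frac{dx}{x}$, using that $1/x$ is decreasing.

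First I would isolate the $\ell = 1$ term. Since $s_1 = a_1$, this term equals exactly $1$, which will supply the additive constant on the right-hand side. For $\ell \geq 2$, the hypothesis $a_1 \geq 1$ guarantees $s_{\ell-1} \geq a_1 \geq 1 > 0$, so the logarithmic comparison above applies with no division-by-zero issue; and if some $a_\ell = 0$, both sides of that inequality vanish, so it remains valid. Summing over $\ell = 2, \ldots, T$ makes the right-hand side telescope to $\log(s_T) - \log(s_1)$.

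To finish, I would use $\log(s_1) = \log(a_1) \geq 0$ (again from $a_1 \geq 1$) to discard this nonnegative subtracted term, giving $\sum_{\ell=2}^{T} \frac{a_\ell}{s_\ell} \leq \log(s_T)$. Adding back the $\ell = 1$ term yields $\sum_{\ell=1}^{T} \frac{a_\ell}{s_\ell} \leq 1 + \log(s_T) = 1 + \log\!\left(\sum_{i=1}^{T} a_i\right)$, which is exactly \eqref{eq:rachel}.

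The only genuine subtlety, minor though it is, lives at the first index: there $s_0 = 0$ makes the naive telescoping boundary term $\log(s_0)$ undefined, so a uniform application of the logarithmic comparison would fail. This is precisely the role of the assumption $a_1 \geq 1$: it simultaneously bounds the first summand by the constant $1$ and forces every subsequent partial sum to satisfy $s_{\ell} \geq 1$, so that $-\log(s_1) \leq 0$ and the comparison is legitimate for all $\ell \geq 2$. Handling this boundary case correctly is the one place where care is needed; the rest is a routine telescoping estimate.
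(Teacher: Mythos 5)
Your proof is correct and complete; the key step $\frac{s_\ell - s_{\ell-1}}{s_\ell} \leq \log s_\ell - \log s_{\ell-1}$ (from $\log t \leq t-1$) is exactly the inductive step the paper gestures at, and your telescoping is just that induction unrolled, with the same integral-comparison intuition the paper mentions. You actually carry out the argument in full, including the boundary case at $\ell=1$ where the hypothesis $a_1 \geq 1$ is used, whereas the paper leaves the proof as a sketch.
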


\begin{proof}
The lemma can be proved by induction.
 That the sum should be proportional to $ \log \left({\sum_{i=1}^{T}a_{i}}\right)$ can be seen by associating to the sequence a continuous function $g: \mathbb{R}^{+} \rightarrow \mathbb{R}$ satisfying $g(\ell) = a_{\ell}, 1 \leq \ell \leq T$, and $g(t) = 0$ for $t \geq T$, and replacing sums with integrals.     
\end{proof}

\subsection{Main proof}
\label{proof}
\begin{proof} For simplicity, we write $F_j = F(x_j)$ and $\nabla F_j = \nabla F(x_j)$.  By Lemma \ref{lem:descend}, for $ j\geq 0,$
\begin{align}
\frac{F_{j+1} - F_j }{\eta}&\leq- \langle{ \nabla F_j,\frac{G_j}{b_{j+1}} \rangle}+\frac{ \eta  L}{2b^2_{j+1}} \|G_j\|^2  \nonumber\\
&=-\frac{  \|\nabla F_{j}\|^2}{b_{j+1}} +\frac{ \langle{ \nabla F_{j}, \nabla F_j - G_j \rangle} }{b_{j+1}}+\frac{ \eta  L\|G_{j}\|^2}{2b^2_{j+1}}.  \nonumber
\end{align}
At this point, we cannot apply the standard method of proof for SGD, since $b_{j+1}$ and $G_j$ are correlated random variables and thus, in particular, for the conditional expectation
\begin{align*}
\mathbb{E}_{\xi_j} \left[ \frac{\langle{ \nabla F_{j}, \nabla F_j - G_j \rangle} }{b_{j+1}}\right] & \neq  \frac{\mathbb{E}_{\xi_j} \left[\langle{ \nabla F_{j}, \nabla F_j - G_j \rangle} \right]}{b_{j+1}} = \frac{1}{b_{j+1}} \cdot 0;
\end{align*}
If we had a closed form expression for $\mathbb{E}_{\xi_j} [\frac{1}{b_{j+1}}]$, we would proceed by bounding this term as
\begin{align}
 \left| \mathbb{E}_{\xi_j} \left[ \frac{1}{b_{j+1}}\langle{ \nabla F_{j}, \nabla F_j - G_j \rangle}   \right] \right| 
 = &\left| \mathbb{E}_{\xi_j} \left[ \left( \frac{1}{b_{j+1}} - \mathbb{E}_{\xi_j} \left[\frac{1}{b_{j+1}}\right] \right) \langle{ \nabla F_{j}, \nabla F_j - G_j \rangle}   \right] \right| \nonumber \\
\leq& \mathbb{E}_{\xi_j}  \left[ \left|  \frac{1}{b_{j+1}} - \mathbb{E}_{\xi_j} \left[ \frac{1}{b_{j+1}} \right] \right | \| \langle{ \nabla F_{j}, \nabla F_j - G_j \rangle} \|  \right].
\end{align}
However, we do not have a closed form expression for $\mathbb{E}_{\xi_j} [\frac{1}{b_{j+1}}]$. We use the estimate $\frac{1}{ \sqrt{b_{j}^2+ \|\nabla F_j\|^2+\sigma^2}}$ as a surrogate for
 $\mathbb{E}_{\xi_j} [\frac{1}{b_{j+1}}]$ to proceed as we have by Jensen inequality  that {\[ \mathbb{E}_{\xi_j} \left[\frac{1}{b_{j+1}}\right]\geq \frac{1}{\mathbb{E}_{\xi_j} \left[b_{j+1}\right]}= \frac{1}{\mathbb{E}_{\xi_j} \left[\sqrt{b^2_{j} + \|G_j\|^2}\right]} \geq  \frac{1}{\sqrt{\mathbb{E}_{\xi_j} \left[b^2_{j} + \|G_j\|^2\right]}}. \]}
Here we will use $\|\nabla F_j\|^2+\sigma^2$ to approximate $\mathbb{E}_{\xi_j}[\|G_j\|^2]$ . 
Condition on $\xi_1, \dots, \xi_{j-1}$ and take expectation with respect to $\xi_{j}$, 
\begin{align*} 
 0 =  \frac{ \mathbb{E}_{\xi_j} \left[ \langle{ \nabla F_j,  \nabla F_j - G_j \rangle} \right]}{ \sqrt{b_{j}^2+ \|\nabla F_j\|^2+\sigma^2} }  = \mathbb{E}_{\xi_j} \left[  \frac{ \langle{ \nabla F_j,  \nabla F_j - G_j \rangle} }{ \sqrt{b_{j}^2+ \|\nabla F_j\|^2+\sigma^2}}\right] 
\end{align*}
 thus,
\begin{align}
&\frac{\mathbb{E}_{\xi_j} [F_{j+1}]-F_j }{\eta}   \nonumber \\
 \leq&\mathbb{E}_{\xi_j} \left[\frac{ \langle{\nabla F_j,  \nabla F_j - G_j \rangle} }{b_{j+1} }-\frac{\langle{\nabla F_j,  \nabla F_j - G_j \rangle} }{ \sqrt{b_{j}^2+ \|\nabla F_j\|^2+\sigma^2}}  \right]- \mathbb{E}_{\xi_j} \left[\frac{ \|\nabla F_{j}\|^2}{b_{j+1}} \right]+ \mathbb{E}_{\xi_j} \left[ \frac{L\eta \|G_j\|^2}{2b^2_{j+1}} \right]   \nonumber \\
=&  \mathbb{E}_{\xi_j} \left[  \left( \frac{1}{ \sqrt{b_{j}^2+ \|\nabla F_j\|^2+\sigma^2}} - \frac{ 1}{b_{j+1} }\right)  \langle{ \nabla F_j, G_j \rangle}\right]
 - \frac{\|\nabla F_{j}\|^2 }{ \sqrt{b_{j}^2+ \|\nabla F_j\|^2+\sigma^2}} + \frac{\eta L}{2} \mathbb{E}_{\xi_j} \left[ \frac{\|G_j\|^2}{b^2_{j+1}} \right]
\label{eq:first}
\end{align}
Now, observe the term
\begin{align*}
 \frac{1}{\sqrt{b_{j}^2+ \|\nabla F_j\|^2+\sigma^2}}- \frac{1}{b_{j+1}} 
=&  \frac{ (\|G_j\|- \|\nabla F_j\|)( \|G_j\|+\|\nabla F_j\|)-\sigma^2}{b_{j+1}\sqrt{b_{j}^2+ \|\nabla F_j\|^2+\sigma^2} \left( \sqrt{b_{j}^2+ \|\nabla F_j\|^2+\sigma^2}+b_{j+1}\right)} \\
\leq& \frac{  \left|  \|G_j\|- \|\nabla F_j\|\right|}{b_{j+1}\sqrt{b_{j}^2+ \|\nabla F_j\|^2+\sigma^2} }+ \frac{\sigma}{ b_{j+1}\sqrt{b_{j}^2+ \|\nabla F_j\|^2+\sigma^2} }
   \end{align*}
thus, applying Cauchy-Schwarz,
\begin{align}
&\mathbb{E} _{\xi_j} \left[  \left(\frac{1}{\sqrt{b_{j}^2+ \|\nabla F_j\|^2+\sigma^2}} -  \frac{1}{b_{j+1}} \right) \langle{ \nabla F_{j}, G_{j}\rangle}  \right] \nonumber\\
 \leq & \mathbb{E} _{\xi_j} \left[  \frac{ \left| \|G_j\|- \|\nabla F_j\| \right| \|G_j\|\| \nabla F_{j}\|}{b_{j+1}\sqrt{b_{j}^2+ \|\nabla F_j\|^2+\sigma^2} } \right] +  \mathbb{E} _{\xi_j} \left[\frac{\sigma \|G_j\|\| \nabla F_{j}\|}{ b_{j+1}\sqrt{b_{j}^2+ \|\nabla F_j\|^2+\sigma^2} }  \right]\label{eq:keyinq1}
\end{align}
By applying the inequality  $ a b \leq  \frac{1}{2\lambda} b^2+\frac{\lambda }{2}a^2 $  with $\lambda=\frac{2\sigma^2}{\sqrt{b_{j}^2+ \|\nabla F_j\|^2+\sigma^2 }}$, $a=\frac{\|G_j\|}{b_{j+1}}$, and {$b= \frac{\left| \|G_j\|- \|\nabla F_j\| \right| \| \nabla F_{j}\| }{ \sqrt{b_{j}^2+ \|\nabla F_j\|^2+\sigma^2}}$}, the first term in \eqref{eq:keyinq1} can be bounded as\begin{align}
&\mathbb{E}_{\xi_j} \left[  \frac{ \left| \|G_j\|- \|\nabla F_j\| \right| \|G_j\|\| \nabla F_{j}\|}{b_{j+1}\sqrt{b_{j}^2+ \|\nabla F_j\|^2+\sigma^2} } \right] \nonumber\\
\leq   &   \frac{\sqrt{b_{j}^2+ \|\nabla F_j\|^2+\sigma^2}}{4\sigma^2}\frac{  \|\nabla F_{j}\|^2 \mathbb{E}_{\xi_{j}} \left[ \left(\|G_j\|- \|\nabla F_j\|\right)^2 \right]}{b_{j}^2+\|\nabla F_{j}\|^2+\sigma^2} 
+\frac{\sigma^2}{\sqrt{b_{j}^2+ \|\nabla F_j\|^2+\sigma^2}}\mathbb{E}_{\xi_j} \left[ \frac{\|G_{j}\|^2}{b_{j+1}^2} \right] \nonumber\\
\leq    &  \frac{ \|\nabla F_{j}\|^2}{4\sqrt{b_{j}^2+ \|\nabla F_j\|^2+\sigma^2}} +\sigma\mathbb{E}_{\xi_j} \left[ \frac{\|G_{j}\|^2}{b_{j+1}^2} \right]  .\label{eq:keyinq_a}
\end{align}
where the first term in the last inequality  is due to the fact that $$ \left| \|G_j\|- \|\nabla F_j\| \right|\leq \|G_j- \nabla F_j\| .$$
Similarly, applying the inequality  $a b \leq \frac{\lambda}{2} a^2 + \frac{1}{2\lambda} b^2$  with $\lambda=\frac{2}{\sqrt{b_{j}^2+ \|\nabla F_j\|^2+\sigma^2 }}$, $a=\frac{\sigma\|G_j\|}{b_{j+1}}$, and $b= \frac{\|\nabla F_j\| }{\sqrt{b_{j}^2+ \|\nabla F_j\|^2+\sigma^2 }}$, the second term of the right hand side in equation \eqref{eq:keyinq1} is bounded by
\begin{align}
 \mathbb{E}_{\xi_j} \left[\frac{\sigma\| \nabla F_{j}\|\| G_{j}\|}{b_{j+1}\sqrt{b_{j}^2+ \|\nabla F_j\|^2+\sigma^2 }}  \right] 
 \leq  \sigma\mathbb{E}_{\xi_{j}} \left[\frac{\|G_{j}\|^2}{b_{j+1}^2} \right]+ \frac{  \|\nabla F_{j}\|^2}{4\sqrt{b_{j}^2+ \|\nabla F_j\|^2+\sigma^2 }}. \label{eq:keyinq_b}
\end{align}
Thus, putting inequalities \eqref{eq:keyinq_a} and \eqref{eq:keyinq_b} back into \eqref{eq:keyinq1} gives
\begin{align*} 
 &\quad\mathbb{E}_{\xi_j} \left[  \left( \frac{1}{\sqrt{b_{j}^2+ \|\nabla F_j\|^2+\sigma^2}} - \frac{1}{b_{j+1} } \right) \langle{ \nabla F_j, G_j \rangle} \right]
\leq  2\sigma\mathbb{E}_{\xi_j} \left[ \frac{\|G_{j}\|^2}{b_{j+1}^2} \right] +  \frac{\|\nabla F_{j}\|^2}{2\sqrt{b_{j}^2+ \|\nabla F_j\|^2+\sigma^2}} 
\end{align*}
and, therefore, back to \eqref{eq:first},
\begin{align}
\frac{\mathbb{E}_{\xi_j} [F_{j+1}]-F_j}{\eta} 
\leq& \frac{\eta L}{2} \mathbb{E}_{\xi_j} \left[ \frac{\|G_j\|^2}{b^2_{j+1}} \right] +2\sigma \mathbb{E}_{\xi_j} \left[ \frac{\|G_j\|^2}{b^2_{j+1}} \right] 
  - \frac{\|\nabla F_{j}\|^2 }{ 2\sqrt{b_{j}^2+ \|\nabla F_j\|^2+\sigma^2}}  \nonumber
\end{align}
Rearranging,
\begin{align}
\frac{ \|\nabla F_{j}\|^2}{2\sqrt{b_{j}^2+ \|\nabla F_j\|^2+\sigma^2}} 
 \leq &\frac{F_j - \mathbb{E}_{\xi_j} [F_{j+1}] }{\eta}  + \frac{4\sigma+\eta L}{2}\mathbb{E}_{\xi_j} \left[ \frac{\|G_{j}\|^2}{b_{j+1}^2} \right]  \nonumber 
\end{align}
Applying the law of total expectation, we take the expectation of each side with respect to $\xi_{j-1}, \xi_{j-2}, \dots, \xi_1$, and arrive at the recursion
\begin{align*}
&\mathbb{E} \left[ \frac{ \|\nabla F_{j}\|^2}{2\sqrt{b_{j}^2+ \|\nabla F_j\|^2+\sigma^2}} \right] 
\leq \frac{\mathbb{E}[F_j] - \mathbb{E} [F_{j+1}] }{\eta}
+  \frac{4\sigma+\eta L}{2}\mathbb{E} \left[ \frac{\|G_{j}\|^2}{b_{j+1}^2} \right] . \nonumber 
\end{align*}
Taking $j=N$ and summing up from $k= 0$ to $k = N-1$,
\begin{align}
\sum_{k=0}^{N-1}  \mathbb{E} \left[ \frac{ \|\nabla F_{k}\|^2}{ 2\sqrt{b_{k}^2+ \|\nabla F_k\|^2+\sigma^2}} \right]   
&\leq\frac{ F_{0} - F^{*}}{\eta}+ \frac{4\sigma+\eta L}{2}\mathbb{E}\sum_{k=0}^{N-1}   \left[ \frac{\|G_{k}\|^2}{b_{k+1}^2} \right] \nonumber\\
&\leq\frac{ F_{0} - F^{*}}{\eta}+ \frac{4\sigma+\eta L}{2} \log \left( 10+\frac{ 20N\left( \sigma^2 + \gamma^2\right)}{b_0^2} \right)  
\label{eq:lips_sum}
\end{align}
where the second inequality we apply Lemma \eqref{lem:logsum} and then Jensen's inequality to bound the summation:
\begin{align}
\mathbb{E}\sum_{k=0}^{N-1}   \left[ \frac{\|G_{k}\|^2}{b_{k+1}^2} \right] &\leq  \mathbb{E} \left[1+ \log \left(1+\sum_{k=0}^{N-1}\|G_{k}\|^2/b_0^2 \right) \right] \nonumber \\
&\leq  \log \left( 10+\frac{ 20N\left( \sigma^2 + \gamma^2\right)}{b_0^2}  
\right) . \label{eq:logsum} 
\end{align}
since
\begin{align}
\mathbb{E}  \left[ b_{k}^2-b_{k-1}^2\right] &\leq  \mathbb{E}\left[\| G_k\|^2\right] \nonumber \\
&\leq 2\mathbb{E}\left[\| G_k-\nabla F_k\|^2\right] +2\mathbb{E}\left[\| \nabla F_k\|^2\right]  \nonumber \\
&\leq  2 \sigma^2 + 2\gamma^2. \label{eq:Gbound}
\end{align}

\subsubsection{Finishing the proof of the first bound in Theorem \ref{thm:improveSGDmain}}
For the term on left hand side in equation \eqref{eq:lips_sum}, we apply H\"{o}lder's inequality,
\begin{align*}
 \frac{\mathbb{E}|XY|}{\left(\mathbb{E}|Y|^{3} \right)^{\frac{1}{3}} } &\leq \left(\mathbb{E} |X|^{\frac{3}{2}} \right)^ {\frac{2}{3} }   \\
\text{with } X= \left(\frac{  \|\nabla F_{k}\|^2}{\sqrt{b_{k}^2+ \|\nabla F_k\|^2+\sigma^2}} \right)^{\frac{2}{3}}  
 \text{ and }  & Y =\left(\sqrt{b_{k}^2+ \|\nabla F_k\|^2+\sigma^2} \right)^{\frac{2}{3}} \quad \text{to obtain }
\end{align*}
  \begin{align*}  
 \mathbb{E}\left[ \frac{  \|\nabla F_{k}\|^2}{ 2\sqrt{b_{k}^2+ \|\nabla F_k\|^2+\sigma^2}}  \right]
   & \geq   \frac{ \left( \mathbb{E}  \|\nabla F_{k}\|^{
   \frac{4}{3}} \right)^\frac{3}{2} }{ 2 \sqrt{ \mathbb{E} \left[b_{k}^2+ \|\nabla F_k\|^2+\sigma^2\right] } }
   & \geq  \frac{ \left( \mathbb{E}  \|\nabla F_{k}\| ^{\frac{4}{3}} \right)^\frac{3}{2}
}{2\sqrt{b_0^2+2(k+1) (\gamma^2 +\sigma^2)} }
\end{align*}
where the last inequality is due to inequality \eqref{eq:Gbound}.
Thus \eqref{eq:lips_sum}  arrives at the inequality
\begin{align*}
&\frac{N \min_{k\in[ N-1]}  \left( \mathbb{E} \left[ \|\nabla F_{k}\| ^{\frac{4}{3}} \right]\right)^\frac{3}{2}  }{2\sqrt{b_0^2+2N (\gamma^2 +\sigma^2)} } 
 \leq  \frac{ F_{0} - F^{*} }{\eta}+  \frac{4\sigma+\eta L}{2}\left(\log \left( 1+\frac{2N\left( \sigma^2 + \gamma^2\right)}{b_0^2} \right)+1  \right).
\end{align*}
Multiplying by $ \frac{  2b_0+ 2\sqrt{2N}(\gamma +\sigma)}{  N}$, the above inequality gives
\begin{align*}
      \min_{k \in [N-1]} \left( \mathbb{E} \left[ \|\nabla F_{k}\|^{
   \frac{4}{3}} \right] \right)^\frac{3}{2}  \leq  \underbrace{\left( \frac{2b_0}{N} + \frac{4(\gamma+\sigma)}{\sqrt{N}}\right)C_{F}}_{C_N}
\end{align*}
where $$C_{F} = \frac{ F_{0} - F^{*} }{\eta}+ \frac{4\sigma+\eta L}{2}\log \left(\frac{20N\left( \sigma^2 + \gamma^2\right)}{b_0^2} +10 \right) 
. $$
Finally, the bound  is obtained by Markov's Inequality:
\begin{align*}
\mathbb{P} \left(\min_{k\in[ N-1]} \|\nabla F_{k}\|^2\geq \frac{C_N}{\delta^{3/2}}\right) 
=&\mathbb{P} \left(\min_{k\in[ N-1]} \left(\|\nabla F_{k}\|^2 \right)^{2/3} \geq \left(  \frac{C_N}{\delta^{3/2}} \right)^{2/3}\right) \\
\leq&\delta \frac{ \mathbb{E}\left[\min_{k\in[ N-1]}   \|\nabla F_{k}\|^{4/3}   \right]}{ C_N^{2/3}} \\
\leq& \delta
\end{align*}
where  in the second step  Jensen's inequality is applied to the concave function $\phi(x) = 
\min_k h_k (x)$.

\subsubsection{Finishing the proof of the second bound in Theorem \ref{thm:improveSGDmain}}
First, observe with probability $1-\delta'$ that 
$$ \sum_{i=0}^{N-1} \|\nabla F_i -G_i\|^2 \leq \frac{N \sigma^2}{\delta'}.$$
For the denominator on the left hand side of the inequality \ref{eq:lips_sum}, we let $Z=\sum_{k=0}^{N-1} \|\nabla F_{k}\|^2$ and so 
\begin{align*}
b_{N-1}^2+ 2( \|\nabla F_{N-1}\|^2 +\sigma^2) 
=&b_0^2+\sum_{i=0}^{N-2} \|G_i\|^2+ 2( \|\nabla F_{N-1}\|^2 +\sigma^2) \\
 \leq &b_0^2+2\sum_{i=0}^{N-1} \|\nabla F_i\|^2 + 2 \sum_{i=0}^{N-2} \|\nabla F_i -G_i\|^2  +2\sigma^2\\
 \leq& b_0^2+2Z+2N\frac{\sigma^2}{\delta'}
\end{align*}
Thus, we further simplify inequality \eqref{eq:lips_sum},
\begin{align*}
 &\mathbb{E} \left[ \frac{ \sum_{k=0}^{N-1} \|\nabla F_{k}\|^2}{ 2\sqrt{b_{N-1}^2+ \|\nabla F_{N-1}\|^2+\sigma^2}} \right]
 \leq \frac{ F_{0} - F^{*} }{\eta}+  \frac{4\sigma+\eta L}{2}\log \left( 10+\frac{20N\left( \sigma^2 + \gamma^2\right)}{b_0^2} \right)  \triangleq {C_F}
\end{align*}
we have with probability  $1-\hat{\delta}-\delta'$ that
\begin{align*}
\frac{C_F}{\hat{\delta}} &\geq \frac{ \sum_{k=0}^{N-1} \|\nabla F_{k}\|^2}{ 2\sqrt{b_{N-1}^2+ \|\nabla F_{N-1}\|^2+\sigma^2}}  
\geq   \frac{Z}{2 \sqrt{b_0^2+2Z+2N\sigma^2/\delta'}} 
 \label{eq:key1}
\end{align*}
That is equivalent to solve the following quadratic equation 
\begin{align*}
Z^2 - \frac{ 8C_F^2}{\hat{\delta}^2}Z -\frac{ 4C_F^2}{\hat{\delta}^2} \left( b_0^2+\frac{2N\sigma^2}{\delta'} \right) \leq 0 
\end{align*} 
which gives
\begin{align*}
Z &\leq \frac{ 4C_F^2}{\hat{\delta}^2}+  \sqrt{ \frac{ 16C_F^4}{ \hat{\delta}^4} +  \frac{ 4C_F^2}{\hat{\delta}^2} \left( b_0^2+\frac{2N\sigma^2}{\delta'} \right)}\\
&\leq \frac{ 8C_F^2}{\hat{\delta}^2}+    \frac{ 2C_F}{\hat{\delta}} \left( b_0 +\frac{\sqrt{2N}\sigma}{ \sqrt{\delta'}} \right)
 \end{align*} 
Let $\hat{\delta} = \delta'=\frac{\delta}{2}$. Replacing $Z$ with $\sum_{k=0}^{N-1} \|\nabla F_{k}\|^2$ and dividing both side with $N$ we have  with probability  $1-\delta$ 
$$\min_{k\in[N-1]}  \|\nabla F_{k}\|^2 \leq \frac{4C_F}{ N \delta } \left( \frac{8C_F}{\delta} +  2b_0 \right)+ \frac{8\sigma C_F}{ \delta^{3/2} \sqrt{N}} .$$

\end{proof}

\section{Proof of Theorem \ref{thm:converge}}
\label{sec:batch}

\subsection{Lemmas}
We will use the following lemma to argue that after an initial number of steps $N =  \lceil{ \frac{(\eta L)^2-b_0^2}{ \varepsilon} \rceil}+1$, either we have already reached a point $x_k$ such that $\| \nabla F(x_k) \|^2 \leq \varepsilon$, or else $b_N \geq \eta L$.
\begin{lem}
\label{lem:increase}
Fix $\varepsilon \in (0,1]$ and $C > 0$.   For any non-negative $a_0, a_1, \dots, $ the dynamical system
$$
b_0 > 0; \quad  \quad b_{j+1}^2 = b_j^2 +  a_j
$$
has the property that after 
{$N = \lceil{ \frac{C^2-b_0^2}{ \varepsilon} \rceil}+1$}
iterations, either $\min_{k=0:N-1}  a_k  \leq \varepsilon$, or $b_{N} \geq  \eta L$. 
\end{lem}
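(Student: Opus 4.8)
The plan is to prove the dichotomy by contraposition: assume the first alternative fails, so that $a_k > \varepsilon$ for every $k \in \{0,1,\dots,N-1\}$, and then show the second alternative $b_N \ge C$ must hold. (I read the conclusion ``$b_N \ge \eta L$'' as $b_N \ge C$, with the identification $C = \eta L$ made in the application preceding the lemma.) The entire argument rests on the fact that the recursion $b_{j+1}^2 = b_j^2 + a_j$ simply accumulates the increments $a_j$, so there is nothing delicate happening dynamically.

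First I would telescope the recursion. Since $b_{j+1}^2 - b_j^2 = a_j \ge 0$, the sequence $(b_j)$ is non-decreasing, and summing over $j = 0, \dots, N-1$ gives the exact identity
\begin{equation*}
b_N^2 = b_0^2 + \sum_{k=0}^{N-1} a_k .
\end{equation*}
Under the standing assumption that $\min_{k=0:N-1} a_k > \varepsilon$, every summand exceeds $\varepsilon$, so the sum is bounded below by $N\varepsilon$, yielding $b_N^2 \ge b_0^2 + N\varepsilon$.

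Next I would feed in the choice $N = \lceil (C^2 - b_0^2)/\varepsilon \rceil + 1$. The point of the ceiling is precisely to guarantee $N \ge (C^2 - b_0^2)/\varepsilon$, hence $N\varepsilon \ge C^2 - b_0^2$, and therefore
\begin{equation*}
b_N^2 \ge b_0^2 + N\varepsilon \ge b_0^2 + (C^2 - b_0^2) = C^2 ,
\end{equation*}
so that $b_N \ge C$, which is the second alternative. The trivial regime $b_0 \ge C$ needs no separate treatment, since monotonicity already forces $b_N \ge b_0 \ge C$; note that in this regime $C^2 - b_0^2 \le 0$, so the formula for $N$ still produces a valid (small) iteration count.

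I do not expect any genuine obstacle here: the result is an accumulation/pigeonhole statement rather than a dynamical one. The only points requiring a little care are the bookkeeping around the ceiling function — making sure the $+1$ and the rounding deliver $N\varepsilon \ge C^2 - b_0^2$ with the correct (non-strict) inequality — and being explicit that $\varepsilon \in (0,1]$ guarantees $\varepsilon > 0$ so the division defining $N$ is well posed. Everything else is a one-line telescoping sum.
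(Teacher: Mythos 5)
Your proof is correct and is essentially the paper's argument: both telescope $b_N^2 = b_0^2 + \sum_{k=0}^{N-1} a_k$ and compare the sum with $N\varepsilon$, the only difference being which side of the contrapositive is assumed (the paper supposes $b_N < C$ and deduces $\min_k a_k \le \varepsilon$, you suppose $\min_k a_k > \varepsilon$ and deduce $b_N \ge C$). You also correctly read the conclusion ``$b_N \ge \eta L$'' as ``$b_N \ge C$,'' matching the paper's intent.
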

\begin{proof}
If $b_0 \geq  \eta C$, we are done. Else $b_0< C$.   Let $N$ be the smallest integer such that $N \geq \frac{C^2 - b_0^2}{\varepsilon}$. Suppose $ b_{N} < C$. Then
\begin{align*}
C^2 > b_{N}^2 &= b_0^2 + \sum_{k=0}^{N-1} a_k >b_{0}^2+N \min_{k \in [N-1]} a_k \quad \Rightarrow \quad  \min_{k \in [N-1]} a_k  \leq  \frac{C^2 - b_0^2}{N}
\end{align*} Hence, for $N \geq \frac{  C^2 - b_0^2}{ \varepsilon} $,
$
\min_{k\in[N-1]} a_k  \leq \varepsilon. 
$ Suppose $\min_{k\in[N-1]} a_k > \epsilon $,  then from above inequalities we have  $b_N > C$.
\end{proof}
The following Lemma  shows that  $\{F(x_k)\}_{k=0}^{\infty}$ is a bounded  sequence for any  value of $b_0>0$.  
\begin{lem}
\label{lem:stabilize1}
Suppose $F \in C_L^1$ and  $F^{*} = \inf_x F(x) > -\infty$.  
Denote by $k_0 \geq 1$ the first index such that $b_{k_0} \geq \eta L$.  Then for all $b_{k} < \eta L, k = 0,1, \ldots, k_0-1$,
\begin{align}
F_{k_0-1} - F^{*} \leq F_0- F^{*} + \frac{\eta^2L}{2} \left(1+ 2\log\left( \frac{b_{k_0-1}}{b_0}  \right)  \right)
\end{align}
\end{lem}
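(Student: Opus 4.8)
The plan is to run the Descent Lemma (Lemma \ref{lem:descend}) one step at a time and exploit the fact that the AdaGrad-Norm update makes the squared gradient telescope against the denominator $b_j^2$. Writing $F_j = F(x_j)$, the update $x_j = x_{j-1} - \frac{\eta}{b_j}\nabla F(x_{j-1})$ combined with Lemma \ref{lem:descend} gives, for every $j \geq 1$,
\[
F_j - F_{j-1} \le -\frac{\eta}{b_j}\|\nabla F(x_{j-1})\|^2 + \frac{\eta^2 L}{2 b_j^2}\|\nabla F(x_{j-1})\|^2.
\]
Since the first term on the right is nonpositive, I would simply discard it; this is the key conceptual point, because in the regime $b_k < \eta L$ the stepsize $\eta/b_j$ may exceed $1/L$, so individual steps can \emph{increase} $F$ and a monotone-descent argument is unavailable. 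One must instead control the \emph{total} overshoot.

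First I would substitute the defining relation $\|\nabla F(x_{j-1})\|^2 = b_j^2 - b_{j-1}^2$ into the surviving term, obtaining the clean one-step bound $F_j - F_{j-1} \le \frac{\eta^2 L}{2}\cdot \frac{b_j^2 - b_{j-1}^2}{b_j^2}$. Summing over $j = 1, \dots, k_0-1$ telescopes the left-hand side to $F_{k_0-1} - F_0$, reducing the problem to bounding the sum $\sum_{j=1}^{k_0-1}\frac{b_j^2-b_{j-1}^2}{b_j^2}$.

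The crux is the logarithmic bound on that sum. I would treat the first index separately, using $\frac{b_1^2-b_0^2}{b_1^2} = 1 - \frac{b_0^2}{b_1^2}\le 1$, and for $j\ge 2$ apply the elementary inequality $1 - x \le -\log x$, i.e. $\frac{b_j^2-b_{j-1}^2}{b_j^2} \le \log\frac{b_j^2}{b_{j-1}^2}$, which is the integral estimate in the spirit of Lemma \ref{lem:logsum}. The remaining terms telescope, $\sum_{j=2}^{k_0-1}\log\frac{b_j^2}{b_{j-1}^2} = \log\frac{b_{k_0-1}^2}{b_1^2} \le \log\frac{b_{k_0-1}^2}{b_0^2} = 2\log\frac{b_{k_0-1}}{b_0}$, using $b_1\ge b_0$. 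Combining gives $\sum_{j=1}^{k_0-1}\frac{b_j^2-b_{j-1}^2}{b_j^2} \le 1 + 2\log\frac{b_{k_0-1}}{b_0}$, and multiplying by $\frac{\eta^2 L}{2}$ then adding $F_0 - F^*$ yields exactly the claimed bound.

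The main obstacle is the logic of the second paragraph rather than any hard estimate: one must recognize that, although the trajectory need not decrease $F$ during the initial large-stepsize phase, the cumulative increase is governed by $\sum (b_j^2-b_{j-1}^2)/b_j^2$, which is forced to be only logarithmically large precisely because the AdaGrad denominator $b_j$ grows as the squared gradients accumulate. I would double-check that the hypothesis $b_k < \eta L$ is used only to identify the phase of interest (the one-step bound above in fact holds for all $j$), and verify the off-by-one in the telescoping so that the additive constant matches $\frac{\eta^2 L}{2}\bigl(1 + 2\log(b_{k_0-1}/b_0)\bigr)$ rather than a sharper or looser value.
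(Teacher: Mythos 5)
Your proof is correct and takes essentially the same route as the paper's: apply Lemma \ref{lem:descend}, discard the negative term $-\frac{\eta}{b_j}\|\nabla F_{j-1}\|^2$, and bound the resulting sum $\sum_j \|\nabla F_{j-1}\|^2/b_j^2$ logarithmically. The only cosmetic difference is that you establish the logarithmic bound inline via $1-x\le -\log x$ and telescoping, which amounts to a self-contained proof of the paper's Lemma \ref{lem:logsum} (there invoked with only a sketched induction).
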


\begin{proof}
Suppose $k_0\geq1$  is the first index such that $b_{k_0} \geq  \eta L$.  By Lemma \ref{lem:descend}, for $ j\leq k_0-1,$
\begin{align}
F_{j+1} \leq F_{j}  -\frac{\eta}{b_{j+1}}(1 - \frac{\eta L}{2b_{j+1}}) & \| \nabla F_{j} \|^2 \leq F_{j} +  \frac{\eta^2L}{2 b_{j+1}^2} \| \nabla F_{j} \|^2 \leq F_{0} +\sum_{\ell=0}^{j}  \frac{\eta^2L}{2 b_{\ell+1}^2}  \| \nabla F_{\ell} \|^2 \nonumber \\
 \Rightarrow \quad F_{k_0-1} - F_0&\leq
 \frac{ \eta^2 L }{2}\sum_{i=0}^{{k_0}-2}  \frac{\| \nabla F_i \|^2}{b_{i+1}^2}  \nonumber\\
&\leq  \frac{ \eta^2 L }{2} \sum_{i=0}^{{k_0}-2}  \frac{(\| \nabla F_{i} \|/b_0)^2}{\sum_{\ell=0}^{i} (\| \nabla F_{\ell} \|/b_0)^2 + 1}\nonumber \\
&\leq \frac{ \eta^2 L}{2} \left(1+ \log\left(1+\sum_{\ell=0}^{k_0-2}\frac{\| \nabla F_{\ell} \|^2}{b_0^2}\right) \right)   \quad   \text{ by Lemma \ref{lem:logsum}}
\nonumber \\
&\leq \frac{ \eta^2 L}{2} \left(1+ \log \left(\frac{b^2_{k_0-1}}{b_0^2}\right)  \right) .
\nonumber 
\end{align}
\end{proof}
\subsection{Main proof}
\begin{proof}
By Lemma \ref{lem:increase}, if $\min_{k\in[N-1]} \| \nabla F(x_k) \|^2 \leq \varepsilon$ is not satisfied after 
$N = \lceil{ \frac{(\eta L)^2 - b_0^2}{ \varepsilon} \rceil}+1$ steps, then there exits a first index $1\leq k_0 \leq N$ such that $\frac{b_{k_0} }{\eta}>  L$.  
By Lemma \ref{lem:descend}, for $ j\geq 0,$

\begin{align}
F_{k_0+j} &\leq F_{k_0+j-1}- \frac{\eta}{b_{k_0+j}}(1 - \frac{\eta L}{2b_{k_0+j}}) \| \nabla F_{k_0+j-1} \|^2 \nonumber\\
&\leq F_{k_0-1}- \sum_{\ell=0}^{j} \frac{\eta}{2 b_{k_0+\ell}} \| \nabla F_{k_0+\ell-1} \|^2\nonumber\\
&\leq F_{k_0-1} -\frac{\eta}{2b_j} \sum_{\ell=0}^{j}  \| \nabla F_{k_0+\ell-1}\|^2
. \label{eq:descent1a}
\end{align}
Let  $Z= \sum_{k=k_0-1}^{M-1} \|\nabla F_{k}\|^2$, it follows that
\begin{align}
 \frac{2 \left(F_{k_0-1}- F^* \right)}{\eta} \geq  \frac{2 \left(F_0- F_{M} \right)}{\eta}  \geq  \frac{\sum_{k=k_0-1}^{M-1}\| \nabla F_{k} \|^2}{b_{M}} 
\geq  \frac{Z}{\sqrt {Z+b_{k_0-1}^2}} \nonumber .
\end{align} Solving the quadratic inequality for $Z$, 
\begin{align}
\sum_{k=k_0-1}^{M-1} \|\nabla F_{k}\| ^2\leq \frac{4 \left(F_{k_0-1}- F^* \right)^2}{\eta^2} +\frac{2 \left(F_{k_0-1}- F^* \right)b_{k_0-1}}{\eta} . \label{eq:boundF}
\end{align}

If $k_0=1$, the stated result holds by multiplying both side by $\frac{1}{M}$. Otherwise, $k_0 > 1.$ 
From Lemma \ref{lem:stabilize1}, we have 
$$
F_{k_0-1} - F^{*} \leq F_0- F^{*} +  \frac{ \eta^2L}{2} \left(1+ 2\log \left(\frac{ \eta L}{b_0}  \right)\right) .
$$
Replacing  $F_{k_0-1} - F^{*} $ in  \eqref{eq:boundF} by above bound, we have 
\begin{align*}
&\sum_{k=k_0-1}^{M-1}  \| \nabla F_{k}\|^2\\
 \leq&  \underbrace{ \left( \frac{2 \left(F_{0}- F^* \right)}{\eta}+ \eta L\left( 1+ 2\log \left({ \eta L}/{b_0}  \right) \right)\right)^2+2L \left(F_{0}- F^* \right)+  (\eta L)^2\left( 1+ 2\log \left(\frac{ \eta L}{b_0}  \right) \right) }_{C_M}
\end{align*}
Thus, we are assured that
$$
\min_{k=0:N+M-1}  \| \nabla F_{k} \|^2 \leq \varepsilon
$$
where $N \leq \frac{L^2 - b_0^2}{\varepsilon} $ and $M = \frac{C_M}{\varepsilon}$ .
\end{proof}

\section{Numerical experiments}
\label{sec:num}
 
With guaranteed convergence of AdaGrad-Norm  and its robustness to the parameters $\eta$ and $b_0$, we perform experiments on several data sets ranging from simple linear regression over Gaussian data to neural network architectures on state-of-the-art (SOTA) image data sets including ImageNet. {These experiments are not about outperforming the strong baseline of well-tuned SGD, but to further strengthen the theoretical finding that the convergence of AdaGrad-norm requires less hyper-parameter tuning while maintaining a comparable performance as the well-tuned SGD methods.}
\subsection{Synthetic data}
In this section, we consider linear regression to corroborate our analysis, i.e.,
$$
F(x) = \frac{1}{2m} \| Ax - y \|^2 = \frac{1}{(m/n)} \sum_{k=1}^{m/n}\frac{1}{2n} \| A_{\xi_k}x  - y_{\xi_k} \|^2
$$
where $A\in\mathbb{R}^{m\times d} $, $m$ is the total number of samples, $n$ is the mini-batch (small sample) size for each iteration, and $ A_{\xi_k}\in \mathbb{R}^{n\times d}$.
 Then  AdaGrad-Norm update is
\begin{align*}
x_{j+1} = x_{j} -  \frac{\eta A_{\xi_j}^{T}\left(A_{\xi_j}x_j  - y_{\xi_j} \right)/n}{\sqrt{b_0^2+\sum_{\ell=0}^j \left(\|A_{\xi_\ell}^{T}\left(A_{\xi_\ell}x_\ell  - y_{\xi_\ell} \right)\|/n \right)^2}}.
\end{align*}
\begin{figure}[h]
  \centering
   \includegraphics[width=.8\textwidth]{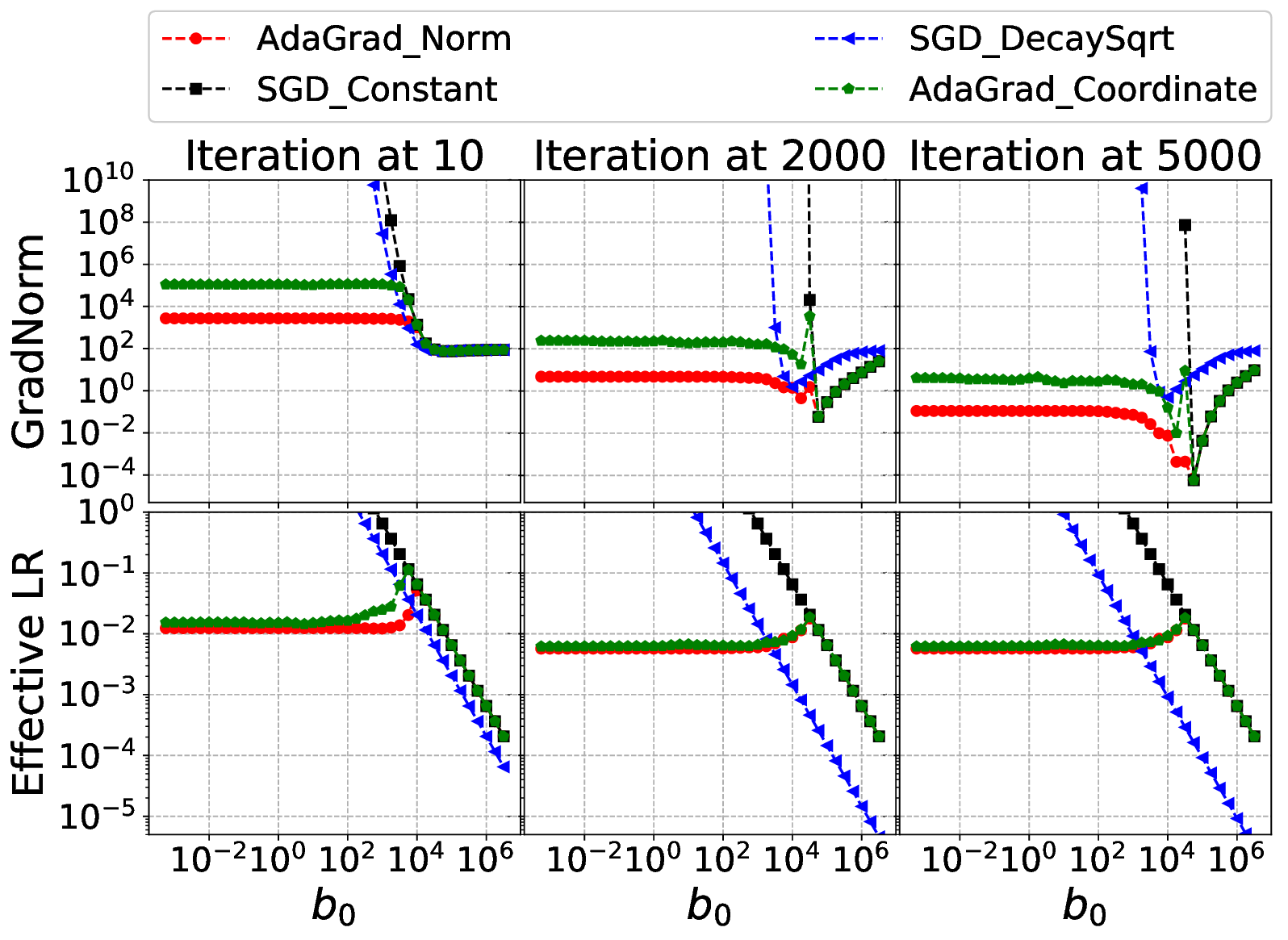}
        \caption{Gaussian Data -- Stochastic Setting. The top 3 figures plot the square of the gradient norm for  linear regression, $\|A^{T}\left(Ax_j  - y\right) \| /m$, w.r.t.  $b_0$,  at iterations 10, 2000 and 5000 (see title) respectively. The bottom 3  figures plot the corresponding effective learning rates (median of $\{b_j(\ell)\}_{\ell=1}^d$ for AdaGrad-Coordinate), w.r.t.  $b_0$,  at iteration  10, 2000 and 5000 respectively (see title). }\label{fig:linear}
  \end{figure}
We simulate $A\in \mathbb{R}^{1000\times2000}$ and $x^{*}\in \mathbb{R}^{1000}$ such that each entry of $A$ and $x^{*}$ is an i.i.d. standard Gaussian. Let $y=Ax^*$. For each iteration, we independently draw a small sample of size $n=20$ and $x_0$ whose entries follow i.i.d. uniform in $[0,1]$. The  vector $x_0$ is same for all the methods so as to eliminate the effect of random initialization in weight vector. Since $F^*=0$, we set  $\eta = F(x_0)-F^* = \frac{1}{2m}\|Ax_0-b\|^2=650$. We vary the initialization $b_0 > 0$ as to compare with plain SGD using (a) SGD-Constant: fixed stepsize $\frac{650}{b_0}$, (b) SGD-DecaySqrt: decaying stepsize  $\eta_j = \frac{650}{b_0\sqrt{j}}$, and (c) AdaGrad-Coodinate:  update the $d$ parameters $b_j(k), k=1,2,\ldots,d$ at each iteration $j$, one for each coordinate of $x_j \in \mathbb{R}^d$. Figure \ref{fig:linear} plots $\|A^{T}\left(Ax_j  - y\right) \|/m$ (GradNorm) and  the effective learning rates  at iterations $10$, $2000$, and $5000$, and as a function of $b_0$, for each of the four methods. The effective learning rates are $\frac{650}{b_j}$ (AdaGrad-Norm), $\frac{650}{b_0}$(SGD-Constant), $\frac{650}{b_0\sqrt{j}}$(SGD-DecaySqrt), and the median of $\{b_j(\ell)\}_{\ell=1}^d$ (AdaGrad-Coordinate). 
   \begin{figure}[h]
  \centering
   \includegraphics[width=.8\textwidth]{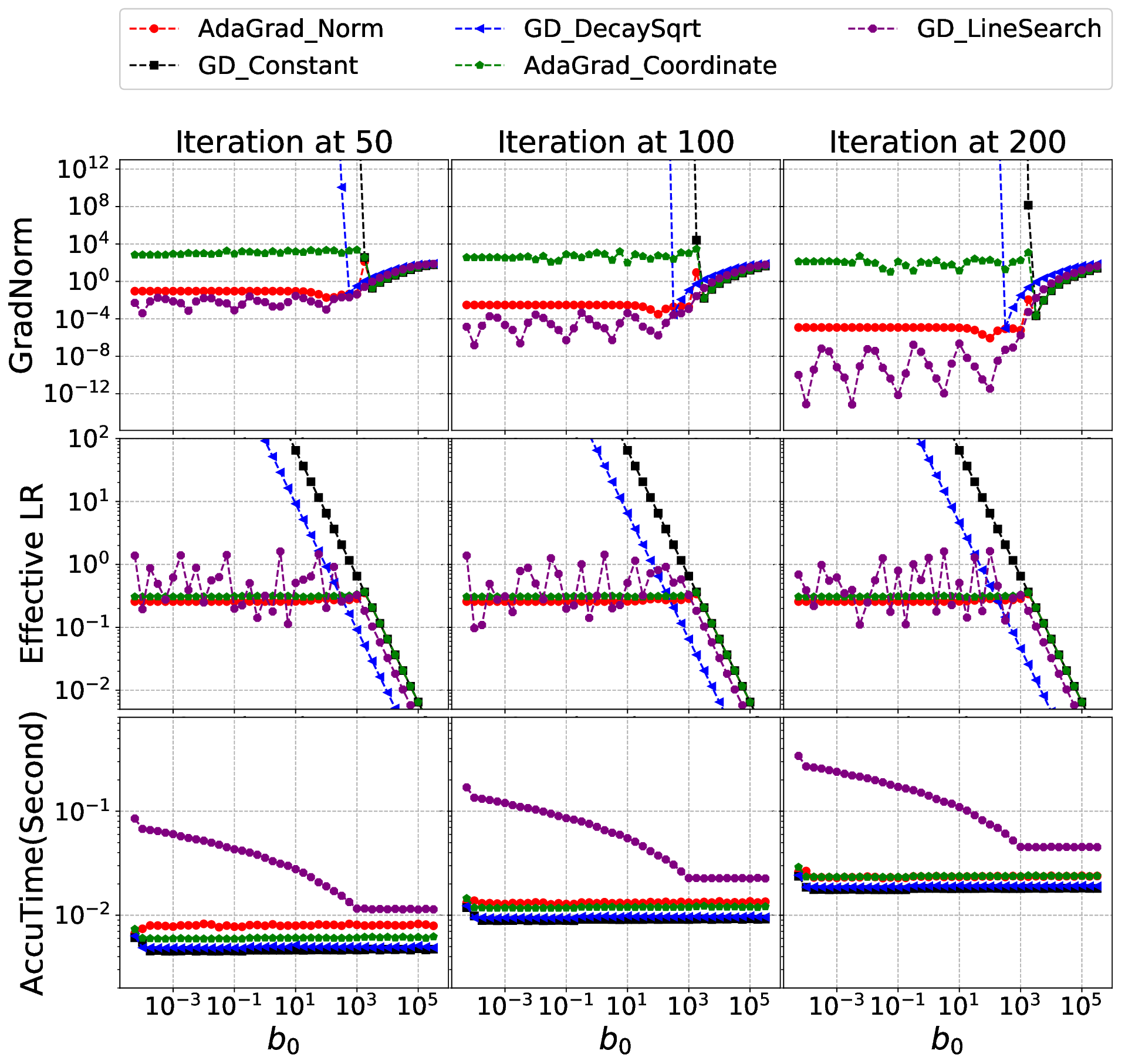}
        \caption{Gaussian Data - Batch Setting.  The y-axis and x-axis in the top and middle 3 figures are the same as in Figure 1. The bottom 3 figures plot the accumulated computational time (AccuTime) up to iteration $50$, $100$ and $200$ (see title), as a function of $b_0$. }\label{fig:batch}
  \end{figure} 
  
We can see in Figure \ref{fig:linear} how AdaGrad-Norm and AdaGrad-Coordinate auto-tune the learning rate adaptively to a certain level to match the unknown Lipschitz smoothness constant and the stochastic noise so that the gradient norm converges for a significantly wider range of $b_0$ than for either SGD method.  In particular, when $b_0$ is initialized too small, AdaGrad-Norm and AdaGrad-Coordinate still converge with good speed while SGD-Constant and SGD-DecaySqrt diverge. When $b_0$ is initialized too large (stepsize too small), surprisingly AdaGrad-Norm and AdaGrad-Coordinate converge at the same speed as  SGD-Constant. This possibly can be explained by Theorem \ref{thm:converge} because this is somewhat like the deterministic setting (the stepsize controls the variance $\sigma$ and a smaller learning rate implies smaller variance).  Comparing AdaGrad-Coordinate and AdaGrad-Norm,  AdaGrad-Norm is more robust to the initialization $b_0$ but is not better than  AdaGrad-Coordinate when the initialization $b_0$ is close to the optimal value of $L$.

Figure \ref{fig:batch} explores the batch gradient descent setting, when there is no variance $\sigma =0$ (i.e., using the whole data sample for one iteration). The experimental setup in Figure 2 is the same as Figure \ref{fig:linear} except for the sample size $m$ of each iteration. Since the line-search method (GD-LineSearch) is one of the most important algorithms in deterministic gradient descent for adaptively choosing the step-size at each iteration, we also compare to this method -- see Algorithm 2 in the appendix for our particular implementation of Line-Search.  We see that the behavior of the four methods, AdaGrad-Norm, AdaGrad-Coordinate, GD-Constant, and GD-DecaySqrt, are very similar to the stochastic setting, albeit AdaGrad-Coordinate here is worse than in the stochastic setting.  Among the five methods in the plot, GD-LineSearch performs the best but with significantly longer computational time, which is not practical in large-scale machine learning problems.

\subsection{Image data}
 \begin{figure}[h]
\centering
\includegraphics[width=0.8\columnwidth]{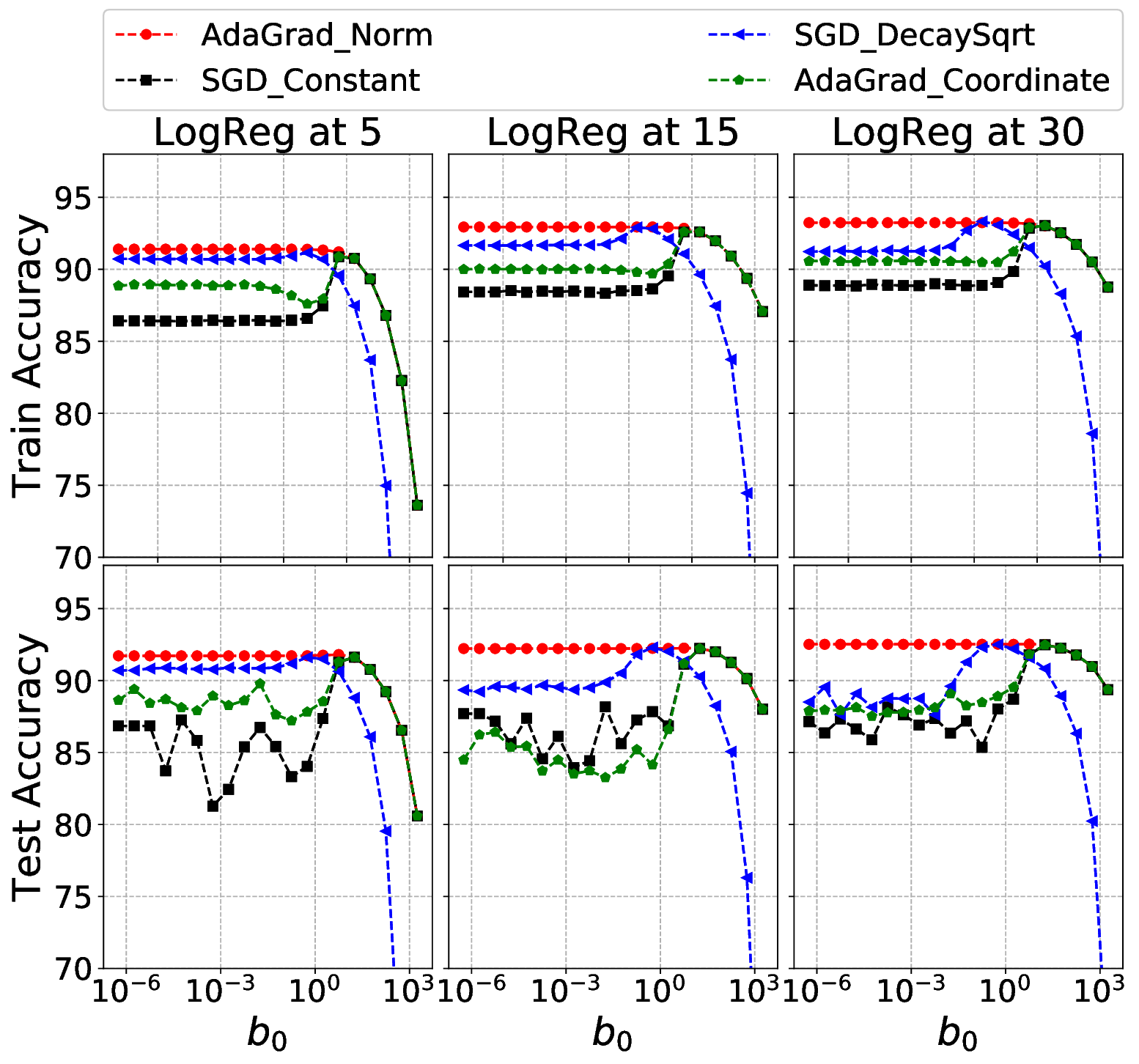}
\caption{MNIST.  In each plot, the y-axis is the train or test accuracy and the x-axis is $b_0$. The 6 plots are for logistic regression (LogReg) with average at epoch 1-5, 11-15 and 26-30. The title is the last epoch of the average. Note green and red curves overlap when $b_0$ belongs to $[10, \infty)$}
\end{figure}
 In this section, we extend our numerical analysis to the setting of deep learning and show that the robustness of AdaGrad-Norm does not come at the price of worse generalization -- an important observation that is not explained by our current theory. The experiments are done in PyTorch \citep{paszke2017automatic} and parameters are by default if no specification is provided.\footnote{The code we used is originally from \url{https://github.com/pytorch/examples/tree/master/imagenet}} We did not find it practical to compute the norm of the gradient for the entire neural network during back-propagation. Instead, we adapt a stepsize for each neuron or each convolutional channel by updating $b_j$ with the gradient of the neuron or channel. Hence, our experiments depart slightly from a strict AdaGrad-Norm method and include a limited adaptive metric component. Details in implementing AdaGrad-Norm  in a neural network are explained in the appendix and the code is also provided.\footnote{\url{https://github.com/xwuShirley/pytorch/blob/master/torch/optim/adagradnorm.py}}

\paragraph{Datasets and Models} We test on three data sets: MNIST \citep{lecun1998gradient}, CIFAR-10 \citep{krizhevsky2009learning} and ImageNet \citep{imagenet_cvpr09}, see Table 1 in the appendix for detailed descriptions.  For MNIST, our models are a logistic regression (LogReg), a multilayer network with two fully connected layers (FulConn2) with 100 hidden units and ReLU activations, and a convolutional neural network (see Table \ref{2cnn} in the appendix for details).  For CIFAR10, our model is ResNet-18  \citep{he2016deep}. For both data sets, we use 256 images per iteration (2 GPUs with 128 images/GPU, 234 iterations per epoch for MNIST and 196 iterations per epoch for CIFAR10). For ImagetNet, we use ResNet-50 and 256 images for one iteration (8 GPUs with 32 images/GPU, 5004 iterations per epoch). Note that we do not use accelerated methods such as adding momentum in the training. 

 We pick these models for the following reasons:
(1) LR with MNIST represents the smooth  loss function;
(2)  FC with MNIST represents the non-smooth loss function;
 (3) CNN with MNIST belongs to a class of simple shallow network architectures;
 (4) ResNet-18 in CIFAR10 represents a complicated network architecture involving many other added features achieving SOTA performance;
(5) ResNet-50 in ImageNet represents large-scale data and a deep network architecture.

\paragraph{Experimental Details}  In order to make the setting match our assumptions, {we make several changes, which are not practically meaningful scenarios but serve only for corroborating our theorems.}

For the experiment in MNIST,  we do not use bias, regularization (zero weight decay), dropout, momentum, batch normalization \citep{ioffe2015batch}, or any other added features that help achieving SOTA performance (see Figure 3 and Figure 4).   However, the architecture of ResNet by default is built with the 
celebrated batch normalization  (Batch-Norm) method as important layers. Batch-Norm accomplishes the auto-tuning property by normalizing the means and variances of mini-batches in a particular way during the forward-propagation, and in return is back-propagated with projection steps. This projection phenomenon is highlighted in weight normalization \citep{salimans2016weight,wwb18}. Thus, in the ResNet-18 experiment on CIFAR10, we are particularly interested in how Batch-Norm interacts with the auto-tuning property of AdaGrad-Norm. We disable the learnable scale and shift parameters in the Batch-Norm layers \footnote{Set {\it{nn.BatchNorm2d(planes,affine=False)}} } and  compare the default setup in ResNet \citep{goyal2017accurate}. The resulted plots are located in Figure 4 (bottom left and bottom right). In the ResNet-50 experiment on ImageNet, we also depart from the standard set-up by initializing the weights of the last fully connected layer with i.i.d. Gaussian samples with mean zero and variance $0.03125$. {Note that the default initialization 
for the last fully-connected layer of ResNet50 is an i.i.d. Gaussian distribution with mean zero and variance of $0.01$. Instead, we use variance $0.03125$ in that the AdaGrad-Norm algorithm takes the norm of the gradient. The initialization of Gaussian distribution with higher variance results in larger accumulative gradient norms, which is likely to make AdaGrad-Norm robust to small initialization of $b_0$. To some extent, AdaGrad-Norm could be sensitive to the model's initialization. But how much sensitive the AdaGrad-Norm, or more generally the adaptive gradient methods, to the initialization of the model could be a potential future direction.}

 \begin{figure}[h]
\centering
\includegraphics[width=0.495\columnwidth]{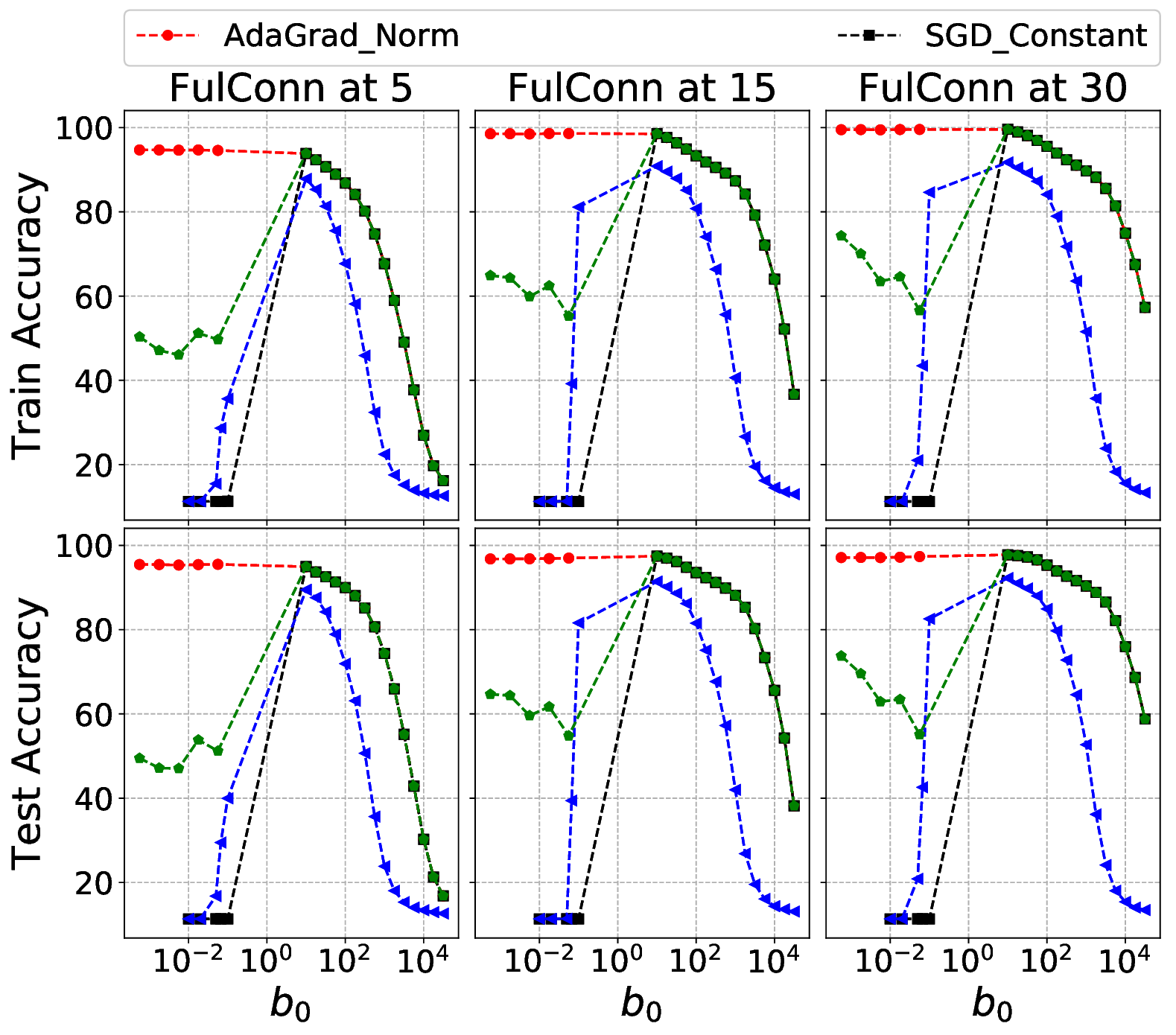} 
\hfill
\includegraphics[width=.495\columnwidth]{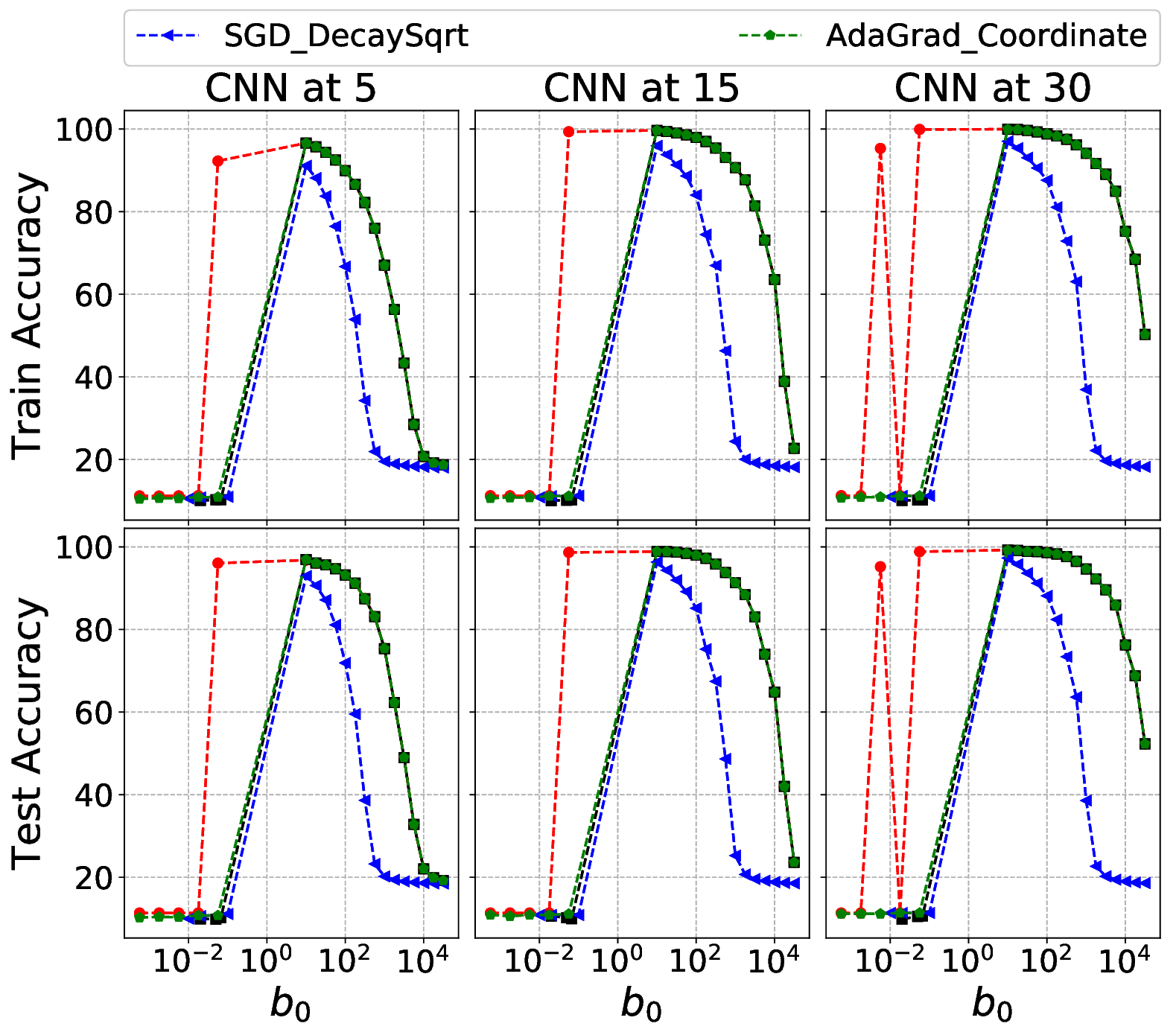}
 \centering
\includegraphics[width=.495\columnwidth]{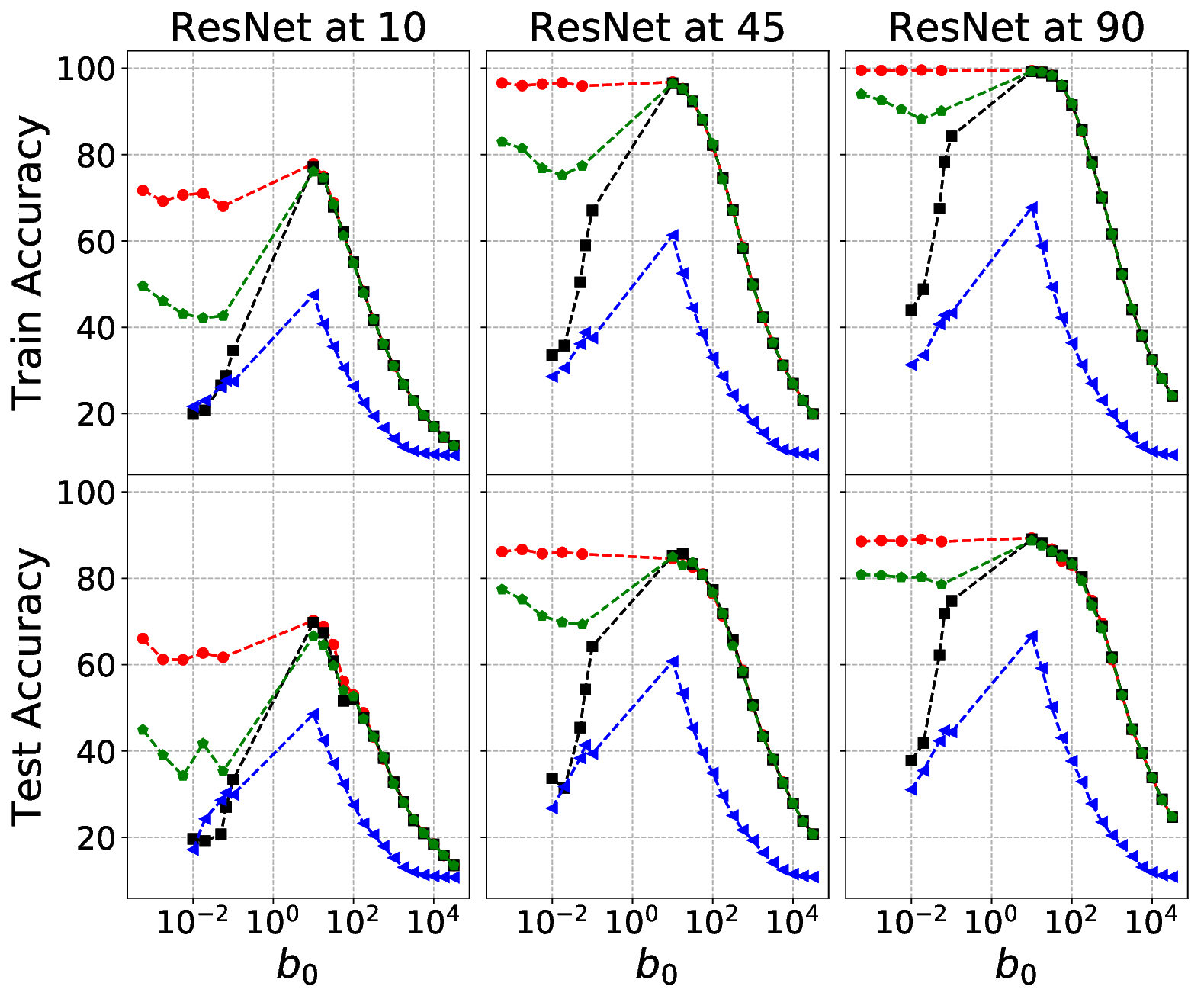}
\hfill
\includegraphics[width=0.495\columnwidth]{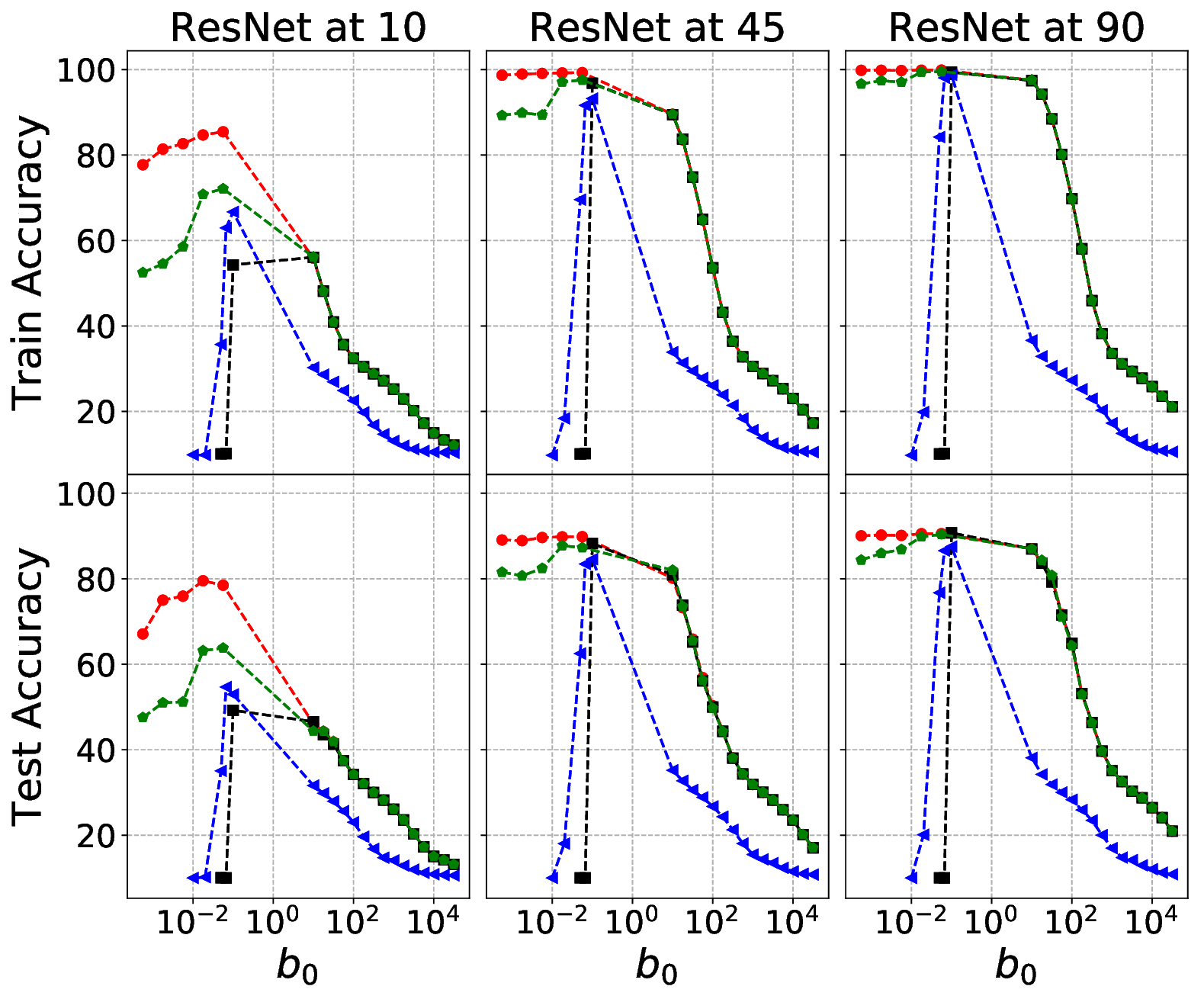}
\caption{In each plot, the y-axis is the train or test accuracy and the x-axis is $b_0$.  Top left 6 plots are for MNIST using the two-layer fully connected network (ReLU activation). Top right  6 plots are for MNIST using convolution neural network (CNN). Bottom left  6 plots are for CIFAR10 using ResNet-18 with disabling learnable parameter in Batch-Norm.  Bottom right  6 plots are for CIFAR10 using ResNet-18 with default Batch-Norm. The points in the (top) bottom plot are the average of epoch (1-5) 6-10, epoch (11-15) 41-45 or epoch (26-30)  86-90. The title is the last epoch of the average. Note green, red  and black curves overlap when $b_0$ belongs to $[10, \infty)$. Better read on screen. }
\end{figure}

For all experiments, same initialized vector  $x_0$ is used for the same model so as to eliminate the effect of random initialization in weight vectors.  We set $\eta=1$ in all AdaGrad implementations, noting that in all these problems we know that $F^{*} = 0$ and we measure that $F(x_0)$ is between 1 and 10.   Indeed, we approximate the loss using a sample of 256 images to be $\frac{1}{256}\sum_{i=1}^{256}f_i(x_0)$: $2.4129$ for logistic regression, $2.305$ for two-layer fully connected model, $2.301$ for  convolution neural network, $2.3848$ for ResNet-18 with disable learnable parameter in Batch-Norm, $2.3459$ for ResNet-18 with default Batch-Norm, and $7.704$ for ResNet-50. We vary the initialization $b_0$ while fixing all other parameters and  plot the training accuracy and testing accuracy after different numbers of epochs.   We compare AdaGrad-Norm with initial parameter $b_0$ to (a) SGD-Constant: fixed stepsize $\frac{1}{b_0}$,  (b) SGD-DecaySqrt:  decaying stepsize  $\eta_j = \frac{1}{b_0\sqrt{j}}$ (c) AdaGrad-Coordinate: a vector of per-coefficient stepsizes. \footnote{We use \it{torch.optim.adagrad}}

\begin{figure}[ht]
 \centering
\includegraphics[width=.8\columnwidth]{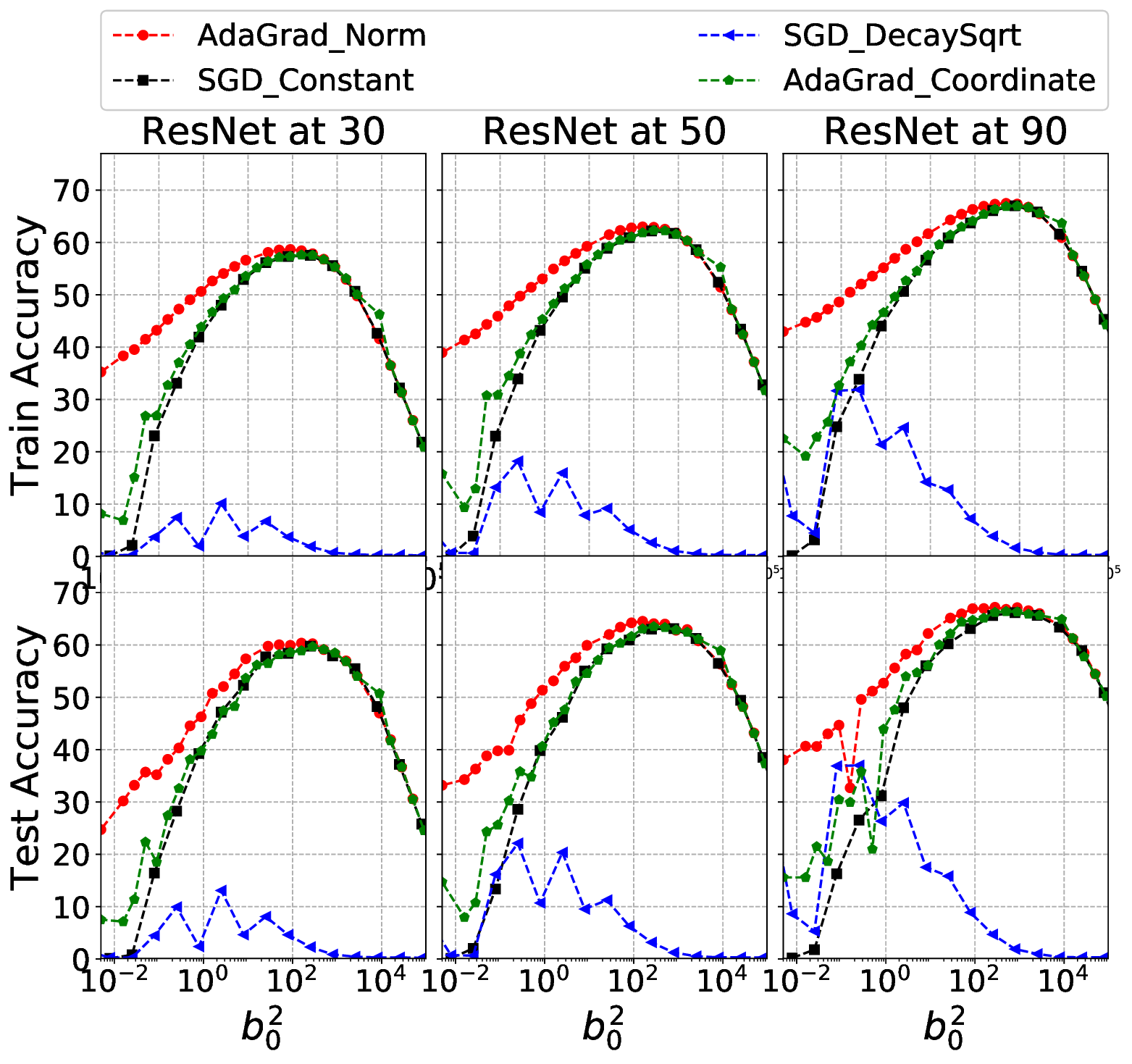}
\caption{ImageNet trained with model ResNet-50. The y-axis is the average train or test accuracy at epoch 26-30, 46-50, 86-90 w.r.t. $b_0^2$. Note no momentum is used in the training. See \textbf{Experimental Details}. Note green, red  and black curves overlap when $b_0$ belongs to $[10, \infty)$.}
\end{figure}
 \begin{figure}[ht]
\centering
\includegraphics[width=.495\columnwidth]{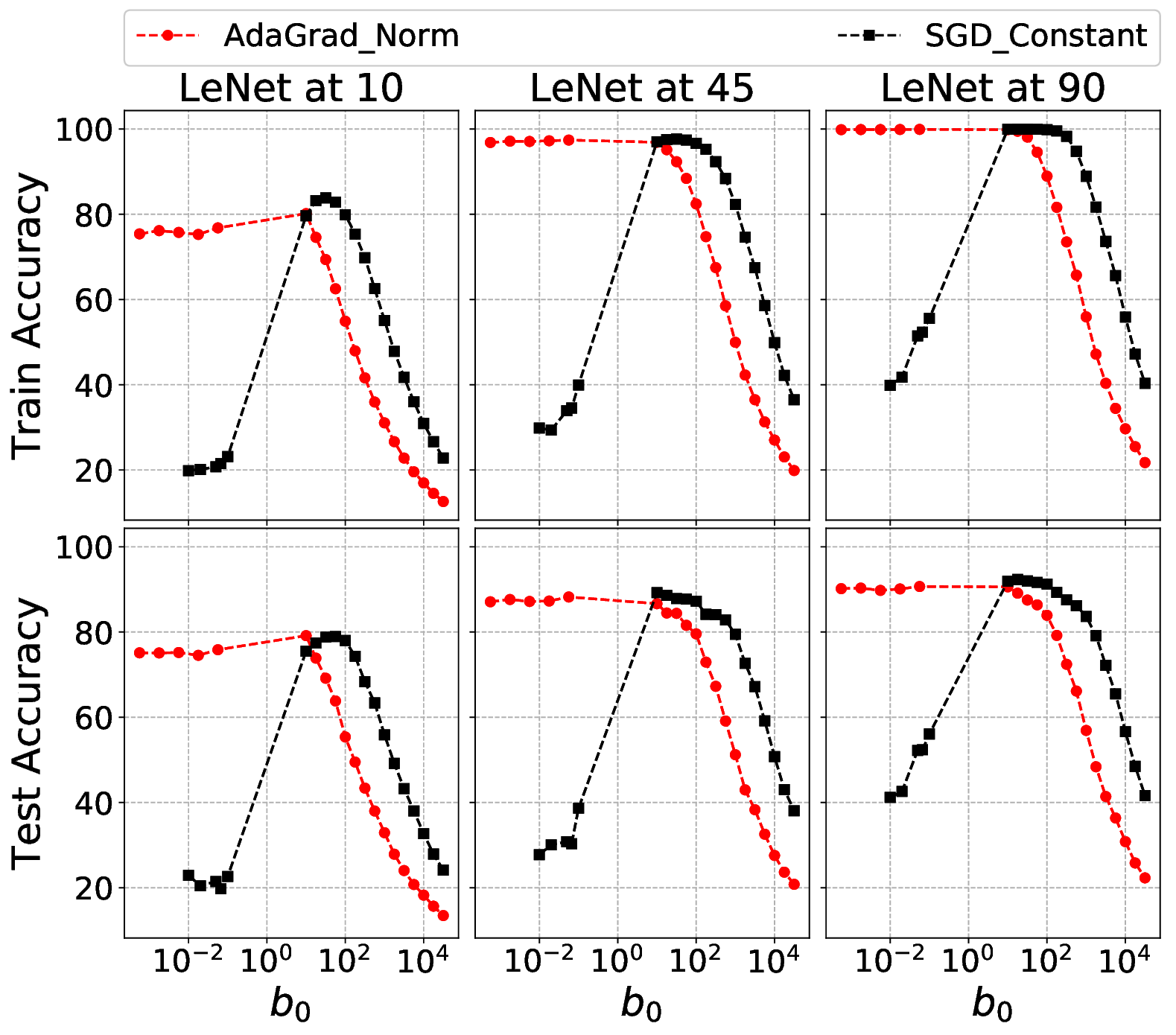}
\hfill
\includegraphics[width=0.495\columnwidth]{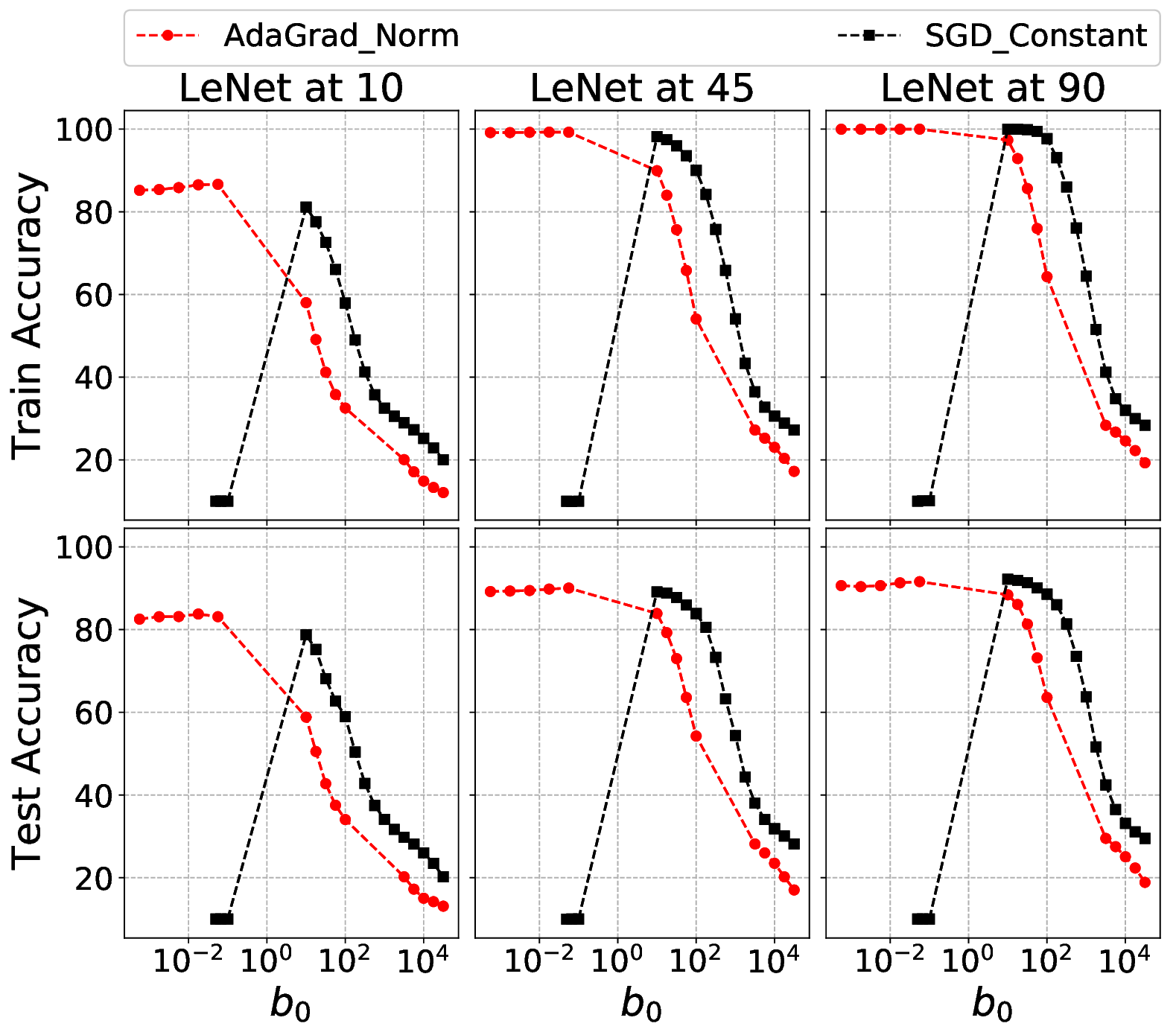}
\caption{The performance of  SGD and AdaGrad-Norm in presence of momentum (see Algorithm 3). In each plot, the y-axis is train or test accuracy and x-axis is $b_0$.  Left  6 plots are for CIFAR10 using ResNet-18 with disabling learnable parameter in Batch-Norm.  Right  6 plots are for CIFAR10 using ResNet-18 with default Batch-Norm.  The points in the plot are the average of epoch 6-10, epoch 41-45 and epoch 86-90, respectively. The title is the last epoch of the average. Better read on screen.}\label{fig:momentum}
\end{figure}

\paragraph{Observations and Discussion}  {In all experiments shown in  Figures 3, 4, and 5, we fix $b_0$ and compare the accuracy for the four algorithms; the convergence of AdaGrad-Norm is much better even for small initial values $b_0$, and shows much stronger robustness than the alternatives. In particular, Figures 3 and 4 show that the AdaGrad-Norm's  accuracy is extremely robust (as good as the best performance) to the choice of $b_0$. At the same time, the SGD methods and AdaGrad-Coordinate are highly sensitive. For Figure 5, the range of parameters $b_0$ for which AdaGrad-Norm attains
its best performance is also larger than the corresponding range for
SGD-Constant and AdaGrad-Coordinate but sub-optimal for small values of $b_0$. It is likely to indicate that for ImageNet training, AdaGrad-Norm does not remove the need to tune $b_0$ but makes the hyper-parameter search for $b_0$ easier. Note that the best test accuracy in Figure 5 is substantially lower than numbers in the
literature, where optimizers for ResNet-50 on ImageNet attain test accuracy around $76\%$ \citep{goyal2017accurate}, about $10\%$ better than the best result in Figure 5. This is mainly because (a) we do not apply momentum methods, and perhaps more critically (b) both SGD and AdaGrad-Norm do not use the default decaying  scheduler for $\eta$ as in  \cite{goyal2017accurate}. Instead, we use a constant rate $\eta=1$. Our purpose is not to achieve the comparable state-of-the-art results but mainly to verify that AdaGrad-Norm is less sensitive to hyper-parameter and requires less hyper-parameter tuning.}

Similar to the Synthetic Data, when $b_0$ is initialized in the range of well-tuned stepsizes, AdaGrad-Norm gives almost the same accuracy as SGD with constant stepsize; when $b_0$ is initialized too small, AdaGrad-Norm still converges with good speed (except for CNN in MNIST), while SGDs do not. The divergence of AdaGrad-Norm with small $b_0$ for CNN in MNIST (Figure 4, top right) can be possibly explained by the unboundedness of gradient norm in the four-layer CNN model. In contrast, the 18-layer or 50-layer ResNet model is very robust to all range of $b_0$ in experiments (Figure 4, bottom), which is due to Batch-Norm that we further discuss in the next paragraph.
 
We are interested in the experiments of Batch-Norm by default and Batch-Norm without learnable parameters because we want to understand how AdaGrad-Norm interacts with models that already have the built-in feature of auto-tuning stepsize such as Batch-Norm. 
First, comparing the outcomes of Batch-Norm with the default setting (Figure 4, bottom right) and without learnable parameters (Figure 4, bottom left), we see the learnable parameters  (scales and shifts)  in Batch-Norm can be very helpful in accelerating the training.  Surprisingly, the best stepsize in Batch-Norm with default for SGD-Constant is at $b_0=0.1$ (i.e., $\eta =10$). While the learnable parameters are more beneficial to AdaGrad-Coordinate, AdaGrad-Norm seems to be affected less.  Overall, combining the two auto-tuning methods (AdaGrad-Norm and  Batch-Norm) give good performance. 

At last, we add momentum to the stochastic gradient descent methods as empirical evidence to showcase the robustness of adaptive methods with momentum shown in Figure \ref{fig:momentum}. Since SGD with  $0.9$ momentum is commonly used, we also set $0.9$ momentum for our implementation of AdaGrad-Norm. See Algorithm 3 in the appendix for details. The results (Figure 6) show that AdaGrad-Norm with momentum is highly robust to initialization while SGD with momentum is not. SGD with momentum does better than AdaGrad-Norm when the initialization  $b_0$ is greater than the Lipschitz smoothness constant. When $b_0$ is smaller than the Lipschitz smoothness constant, AdaGrad-Norm performs as well as SGD with the best stepsize ($0.1$).

\section*{Acknowledgments}
 Special thanks to Kfir Levy for pointing us to his work, to Francesco Orabona for reading a previous version and pointing out a mistake, and to Krishna Pillutla for discussion on the unit mismatch in AdaGrad.  We thank Arthur Szlam and  Mark Tygert for constructive suggestions. We also thank Francis Bach, Alexandre Defossez, Ben Recht, Stephen Wright, and Adam Oberman. We appreciate the help with the experiments from Priya Goyal, Soumith Chintala, Sam Gross, Shubho Sengupta, Teng Li, Ailing Zhang, Zeming Lin, and Timothee Lacroix. Finally, we owe particular gratitude to the reviewers and the editor for their suggestions and comments that significantly improved the paper.

\bibliography{example_paper}

\begin{thebibliography}{48}
\providecommand{\natexlab}[1]{#1}
\providecommand{\url}[1]{\texttt{#1}}
\expandafter\ifx\csname urlstyle\endcsname\relax
  \providecommand{\doi}[1]{doi: #1}\else
  \providecommand{\doi}{doi: \begingroup \urlstyle{rm}\Url}\fi

\bibitem[Agarwal et~al.(2009)Agarwal, Wainwright, Bartlett, and
  Ravikumar]{agarwal2009information}
A.~Agarwal, M.~Wainwright, P.~Bartlett, and P.~Ravikumar.
\newblock Information-theoretic lower bounds on the oracle complexity of convex
  optimization.
\newblock In \emph{Advances in Neural Information Processing Systems}, pages
  1--9, 2009.

\bibitem[Agarwal et~al.(2017)Agarwal, Allen-Zhu, Bullins, and
  Hazan]{agarwal2017}
N.~Agarwal, Z.~Allen-Zhu, B.~Bullins, and T.~Hazan, E.and~Ma.
\newblock Finding approximate local minima faster than gradient descent.
\newblock In \emph{Proceedings of the 49th Annual ACM SIGACT Symposium on
  Theory of Computing}, STOC 2017, pages 1195--1199, 2017.
\newblock ISBN 978-1-4503-4528-6.

\bibitem[Allen-Zhu(2017)]{allen2017natasha}
Z.~Allen-Zhu.
\newblock Natasha: Faster non-convex stochastic optimization via strongly
  non-convex parameter.
\newblock In \emph{Proceedings of the 34th International Conference on Machine
  Learning-Volume 70}, pages 89--97. JMLR. org, 2017.

\bibitem[Allen-Zhu(2018)]{zhu2018natasha}
Z.~Allen-Zhu.
\newblock Natasha 2: Faster non-convex optimization than sgd.
\newblock In S.~Bengio, H.~Wallach, H.~Larochelle, K.~Grauman, N.~Cesa-Bianchi,
  and R.~Garnett, editors, \emph{Advances in Neural Information Processing
  Systems 31}, pages 2675--2686. 2018.

\bibitem[Allen-Zhu and Yang(2016)]{allen2016improved}
Z.~Allen-Zhu and Y.~Yang.
\newblock Improved svrg for non-strongly-convex or sum-of-non-convex
  objectives.
\newblock In \emph{International conference on machine learning}, pages
  1080--1089, 2016.

\bibitem[Barzilai and Borwein(1988)]{BBmethod}
J.~Barzilai and J.~Borwein.
\newblock Two-point step size gradient method.
\newblock \emph{IMA Journal of Numerical Analysis}, 8:\penalty0 141--148, 1988.

\bibitem[{Bottou} et~al.(2018){Bottou}, {Curtis}, and {Nocedal}]{BCN16}
L.~{Bottou}, F.~E. {Curtis}, and J.~{Nocedal}.
\newblock Optimization methods for large-scale machine learning.
\newblock \emph{SIAM Reviews}, 60\penalty0 (2):\penalty0 223--311, 2018.

\bibitem[Bubeck et~al.(2015)]{bubeck2015convex}
S.~Bubeck et~al.
\newblock Convex optimization: Algorithms and complexity.
\newblock \emph{Foundations and Trends{\textregistered} in Machine Learning},
  8\penalty0 (3-4):\penalty0 231--357, 2015.

\bibitem[Carmon et~al.(2017)Carmon, Duchi, Hinder, and Sidford]{carmonconvex}
Y.~Carmon, J.~Duchi, O.~Hinder, and A~Sidford.
\newblock “convex until proven guilty”: Dimension-free acceleration of
  gradient descent on non-convex functions.
\newblock In \emph{International Conference on Machine Learning}, pages
  654--663. PMLR, 2017.

\bibitem[Carmon et~al.(2018)Carmon, Duchi, Hinder, and
  Sidford]{carmon2018accelerated}
Y.~Carmon, J.~Duchi, O.~Hinder, and A.~Sidford.
\newblock Accelerated methods for nonconvex optimization.
\newblock \emph{SIAM Journal on Optimization}, 28\penalty0 (2):\penalty0
  1751--1772, 2018.

\bibitem[Carmon et~al.(2019)Carmon, Duchi, Hinder, and
  Sidford]{carmon2019lower}
Y.~Carmon, J.~Duchi, O.~Hinder, and A.~Sidford.
\newblock Lower bounds for finding stationary points i.
\newblock \emph{Mathematical Programming}, pages 1--50, 2019.

\bibitem[Chen and Gu(2018)]{chen2018closing}
J.~Chen and Q.~Gu.
\newblock Closing the generalization gap of adaptive gradient methods in
  training deep neural networks.
\newblock \emph{arXiv preprint arXiv:1806.06763}, 2018.

\bibitem[Cutkosky and Boahen(2017)]{cutkosky2017online}
A.~Cutkosky and K.~Boahen.
\newblock Online learning without prior information.
\newblock \emph{Proceedings of Machine Learning Research vol}, 65:\penalty0
  1--35, 2017.

\bibitem[Defossez and Bach(2017)]{defossez2017adabatch}
A.~Defossez and F.~Bach.
\newblock Adabatch: Efficient gradient aggregation rules for sequential and
  parallel stochastic gradient methods.
\newblock \emph{arXiv preprint arXiv:1711.01761}, 2017.

\bibitem[Deng et~al.(2009)Deng, Dong, Socher, Li, Li, and
  Fei-Fei]{imagenet_cvpr09}
J.~Deng, W.~Dong, R.~Socher, L.~Li, K.~Li, and L.~Fei-Fei.
\newblock Imagenet: A large-scale hierarchical image database.
\newblock In \emph{2009 IEEE conference on computer vision and pattern
  recognition}, pages 248--255. Ieee, 2009.

\bibitem[Duchi et~al.(2011)Duchi, Hazan, and Singer]{duchi2011adaptive}
J.~Duchi, E.~Hazan, and Y.~Singer.
\newblock Adaptive subgradient methods for online learning and stochastic
  optimization.
\newblock \emph{Journal of Machine Learning Research}, 12\penalty0
  (Jul):\penalty0 2121--2159, 2011.

\bibitem[Fang et~al.(2018)Fang, Li, Lin, and Zhang]{fang2018nips}
C.~Fang, C.~J. Li, Z.~Lin, and T.~Zhang.
\newblock Spider: Near-optimal non-convex optimization via stochastic
  path-integrated differential estimator.
\newblock In S.~Bengio, H.~Wallach, H.~Larochelle, K.~Grauman, N.~Cesa-Bianchi,
  and R.~Garnett, editors, \emph{Advances in Neural Information Processing
  Systems 31}, pages 689--699. 2018.

\bibitem[Ghadimi and Lan(2013)]{ghadimi2013stochastic}
S.~Ghadimi and G.~Lan.
\newblock Stochastic first-and zeroth-order methods for nonconvex stochastic
  programming.
\newblock \emph{SIAM Journal on Optimization}, 23\penalty0 (4):\penalty0
  2341--2368, 2013.

\bibitem[Ghadimi and Lan(2016)]{ghadimi2016accelerated}
S.~Ghadimi and G.~Lan.
\newblock Accelerated gradient methods for nonconvex nonlinear and stochastic
  programming.
\newblock \emph{Mathematical Programming}, 156\penalty0 (1-2):\penalty0 59--99,
  2016.

\bibitem[Goyal et~al.(2017)Goyal, Doll{\'a}r, Girshick, Noordhuis, Wesolowski,
  Kyrola, Tulloch, and Jia]{goyal2017accurate}
P.~Goyal, P.~Doll{\'a}r, R.~Girshick, P.~Noordhuis, L.~Wesolowski, A.~Kyrola,
  A.~Tulloch, and K.~Jia, Y.and~He.
\newblock Accurate, large minibatch sgd: Training imagenet in 1 hour.
\newblock \emph{arXiv preprint arXiv:1706.02677}, 2017.

\bibitem[He et~al.(2016)He, Zhang, Ren, and Sun]{he2016deep}
K.~He, X.~Zhang, S.~Ren, and J.~Sun.
\newblock Deep residual learning for image recognition.
\newblock In \emph{Proceedings of the IEEE conference on computer vision and
  pattern recognition}, pages 770--778, 2016.

\bibitem[Ioffe and Szegedy(2015)]{ioffe2015batch}
S.~Ioffe and C.~Szegedy.
\newblock Batch normalization: Accelerating deep network training by reducing
  internal covariate shift.
\newblock In \emph{International Conference on Machine Learning}, pages
  448--456, 2015.

\bibitem[Kingma and Ba(2015)]{kingma2014adam}
D.~Kingma and J.~Ba.
\newblock Adam: A method for stochastic optimization.
\newblock In \emph{International Conference on Learning Representations}, 2015.

\bibitem[Krizhevsky(2009)]{krizhevsky2009learning}
A.~Krizhevsky.
\newblock Learning multiple layers of features from tiny images.
\newblock 2009.

\bibitem[Lafond et~al.(2017)Lafond, Vasilache, and
  Bottou]{lafond-vasilache-bottou-2017}
J.~Lafond, N.~Vasilache, and L.~Bottou.
\newblock Diagonal rescaling for neural networks.
\newblock Technical report, arXiV:1705.09319, 2017.

\bibitem[LeCun et~al.(1998)LeCun, Bottou, Bengio, and
  Haffner]{lecun1998gradient}
Y.~LeCun, L.~Bottou, Y.~Bengio, and P.~Haffner.
\newblock Gradient-based learning applied to document recognition.
\newblock \emph{Proceedings of the IEEE}, 86\penalty0 (11):\penalty0
  2278--2324, 1998.

\bibitem[Lei et~al.(2017)Lei, J., Chen, and Jordan]{lei2017non}
L.~Lei, Cheng J., J.~Chen, and M.~Jordan.
\newblock Non-convex finite-sum optimization via scsg methods.
\newblock In \emph{Advances in Neural Information Processing Systems}, pages
  2348--2358, 2017.

\bibitem[Levy(2017)]{levy2017online}
K.~Levy.
\newblock Online to offline conversions, universality and adaptive minibatch
  sizes.
\newblock In \emph{Advances in Neural Information Processing Systems}, pages
  1612--1621, 2017.

\bibitem[Li and Orabona(2019)]{orabona18}
X.~Li and F.~Orabona.
\newblock On the convergence of stochastic gradient descent with adaptive
  stepsizes.
\newblock In \emph{The 22nd International Conference on Artificial Intelligence
  and Statistics}, pages 983--992. PMLR, 2019.

\bibitem[McMahan and Streeter(2010)]{mcmahan2010adaptive}
B.~McMahan and M.~Streeter.
\newblock Adaptive bound optimization for online convex optimization.
\newblock \emph{Conference on Learning Theory}, page 244, 2010.

\bibitem[Mukkamala and Hein(2017)]{pmlr-v70-mukkamala17a}
M.~C. Mukkamala and M.~Hein.
\newblock Variants of {RMSP}rop and {A}dagrad with logarithmic regret bounds.
\newblock In \emph{Proceedings of the 34th International Conference on Machine
  Learning}, pages 2545--2553, 2017.

\bibitem[Nemirovski and Yudin(1983)]{NY09}
A.~Nemirovski and D.~Yudin.
\newblock Problem complexity and method efficiency in optimization.
\newblock 1983.

\bibitem[Nemirovski et~al.(2009)Nemirovski, Juditsky, Lan, and Shapiro]{NJG83}
A.~Nemirovski, A.~Juditsky, G.~Lan, and A.~Shapiro.
\newblock Robust stochastic approximation approach to stochastic programming.
\newblock \emph{SIAM Journal on Optimization}, 19:\penalty0 1574--1609, 2009.

\bibitem[Nesterov(1998)]{nesterov1998introductory}
Y.~Nesterov.
\newblock Introductory lectures on convex programming volume i: Basic course.
\newblock 1998.

\bibitem[Orabona and Pal(2015)]{OP15}
F.~Orabona and D.~Pal.
\newblock Scale-free algorithms for online linear optimization.
\newblock In \emph{ALT}, 2015.

\bibitem[Paszke et~al.(2017)Paszke, Gross, Chintala, Chanan, Yang, DeVito, Lin,
  Desmaison, and Antiga]{paszke2017automatic}
A.~Paszke, S.~Gross, S.~Chintala, G.~Chanan, E.~Yang, Z.~DeVito, Z.~Lin,
  A.~Desmaison, and A.~Antiga, L.and~Lerer.
\newblock Automatic differentiation in pytorch.
\newblock 2017.

\bibitem[Reddi et~al.(2016)Reddi, Sra, P{\'o}czos, and Smola]{reddi2016fast}
S.~J. Reddi, S.~Sra, B.~P{\'o}czos, and A.~Smola.
\newblock Fast incremental method for smooth nonconvex optimization.
\newblock In \emph{2016 IEEE 55th Conference on Decision and Control (CDC)},
  pages 1971--1977. IEEE, 2016.

\bibitem[Reddi et~al.(2018)Reddi, Kale, and Kumar]{j.2018on}
S.~J. Reddi, S.~Kale, and S.~Kumar.
\newblock On the convergence of adam and beyond.
\newblock In \emph{International Conference on Learning Representations}, 2018.

\bibitem[Robbins and Monro(1951)]{rm51}
H.~Robbins and S.~Monro.
\newblock A stochastic approximation method.
\newblock In \emph{The Annals of Mathematical Statistics}, volume~22, pages
  400--407, 1951.

\bibitem[Salimans and Kingma(2016)]{salimans2016weight}
T.~Salimans and D.~Kingma.
\newblock Weight normalization: A simple reparameterization to accelerate
  training of deep neural networks.
\newblock In \emph{Advances in Neural Information Processing Systems}, pages
  901--909, 2016.

\bibitem[Srivastava and Swersky(2012)]{hinton2012neural}
G.~Hinton~N. Srivastava and K.~Swersky.
\newblock Neural networks for machine learning-lecture 6a-overview of
  mini-batch gradient descent, 2012.

\bibitem[Tan et~al.(2016)Tan, Ma, Dai, and Qian]{tan2016barzilai}
C.~Tan, S.~Ma, Y.~Dai, and Y.~Qian.
\newblock Barzilai-borwein step size for stochastic gradient descent.
\newblock In \emph{Advances in Neural Information Processing Systems}, pages
  685--693, 2016.

\bibitem[Wilson et~al.(2017)Wilson, Roelofs, Stern, Srebro, and
  Recht]{wilson2017marginal}
A.~Wilson, R.~Roelofs, M.~Stern, N.~Srebro, and B.~Recht.
\newblock The marginal value of adaptive gradient methods in machine learning.
\newblock In \emph{Advances in Neural Information Processing Systems}, pages
  4148--4158, 2017.

\bibitem[Wright and Nocedal(2006)]{Nocedal2006}
S.~Wright and J.~Nocedal.
\newblock \emph{Numerical Optimization}.
\newblock Springer New York, New York, NY, 2006.
\newblock ISBN 978-0-387-40065-5.

\bibitem[Wu et~al.(2018)Wu, Ward, and Bottou]{wwb18}
X.~Wu, R.~Ward, and L.~Bottou.
\newblock {W}{N}{G}rad: Learn the learning rate in gradient descent.
\newblock \emph{arXiv preprint arXiv:1803.02865}, 2018.

\bibitem[Xu et~al.(2018)Xu, Jin, and Yang]{xu2017neon}
Yi~Xu, Rong Jin, and Tianbao Yang.
\newblock First-order stochastic algorithms for escaping from saddle points in
  almost linear time.
\newblock In \emph{Advances in Neural Information Processing Systems}, pages
  5530--5540, 2018.

\bibitem[Zeiler(2012)]{zeiler2012adadelta}
M.~Zeiler.
\newblock {A}{D}{A}{D}{E}{L}{T}{A}: an adaptive learning rate method.
\newblock In \emph{arXiv preprint arXiv:1212.5701}, 2012.

\bibitem[Zhou et~al.(2018)Zhou, Xu, and Gu]{zhou2018stochastic}
D.~Zhou, P.~Xu, and Q.~Gu.
\newblock Stochastic nested variance reduced gradient descent for nonconvex
  optimization.
\newblock In \emph{Advances in Neural Information Processing Systems}, pages
  3925--3936, 2018.

\end{thebibliography}

\appendix
\section{Tables}

\begin{table}[H]
\caption{Statistics of data sets. DIM is the dimension of a sample}
\label{data-table}
\begin{center}
\begin{small}
\begin{sc}
\begin{tabular}{lcccr}
\toprule
Dataset &  Train  & Test & Classes & Dim\\
\midrule
MNIST    &   60,000& 10,000& 10& 28$\times$28 \\
CIFAR-10    & 50,000 & 10,000 &10 & 32$\times$32\\
 ImageNet    & 1,281,167 & 50,000 & 1000 & Various\\
\bottomrule
\end{tabular}
\end{sc}
\end{small}
\end{center}
\end{table}
\begin{table}[H]
\caption{Architecture for four-layer convolution neural network (CNN)}
\label{2cnn}
\begin{center}
\begin{small}
\begin{sc}
\begin{tabular}{cccr}
\toprule
Layer type &  Channels & Out Dimension\\
$5\times 5$ conv relu & 20 & 24\\
$2\times 2$ max pool, str.2 & 20  & 12\\
$5\times 5$ conv relu & 50 & 8\\
$2\times 2$ max pool, str.2 & 50 & 4\\
FC  relu& N/A & 500\\
FC  relu& N/A & 10\\
\bottomrule
\end{tabular}
\end{sc}
\end{small}
\end{center}
\end{table}

\section{Implementing Algorithm 1 in a neural network}
In this section, we give the details for implementing our algorithm in a neural network. In the standard neural network architecture, the computation of each neuron consists of an elementwise nonlinearity  of a linear transform of input features or output of previous layer:
\begin{equation}
y = \phi (\langle{w,x\rangle}+b)  \label{eqa:neuron},
\end{equation}
where $w$ is the $d$-dimensional weight vector, $b$ is a scalar bias term, $x$,$y$ are respectively a $d$-dimensional vector of input features (or output of previous layer) and the output of current neuron, $\phi(\cdot)$ denotes an element-wise nonlinearity.  

For fully connected layer,   the stochastic gradient $G$  in Algorithm 1 represents the gradient of the current neuron (see the green curve,  Figure \ref{fig:M1}).  Thus, when implementing our algorithm in PyTorch, AdaGrad Norm is one learning rate associated to one neuron for fully connected layer, while SGD has one learning rate for all neurons. 

For convolution layer, the stochastic gradient $G$  in Algorithms 1 represents the gradient of each channel  in the neuron. For instance, there are 6 learning rates for the first layer in the LeNet architecture (Table 1).  Thus,  AdaGrad-Norm is one learning rate associated to one channel. 

\def\layersep{1.5cm}
\begin{figure}[H]
\centering
\begin{tikzpicture}[
   shorten >=1pt,->,
   draw = black!50,
    node distance=\layersep,
    every pin edge/.style={<-,shorten <=1pt},
    neuron/.style={circle,fill=black!25,minimum size=17pt,inner sep=0pt},
    input neuron/.style={neuron, fill=black!50},
    output neuron/.style={neuron, fill=black!50},
    hidden neuron/.style={neuron, fill=black!50},
    annot/.style={text width=4em, text centered}
]

    \foreach \name / \y in {1,...,4}
        \node[input neuron, pin=left:Dim \y] (I-\name) at (0,-\y) {};

    \newcommand\Nhidden{2}

    \foreach \N in {1,...,\Nhidden} {
       \foreach \y in {1,...,5} {
          \path[yshift=0.5cm]
              node[hidden neuron] (H\N-\y) at (\N*\layersep,-\y cm) {};
           }
   \node[annot,above of=H\N-1, node distance=1cm] (hl\N) {Hidden layer \N};
    }

    \node[output neuron,pin={[pin edge={<-}]right: loss}, right of=H\Nhidden-3] (O) {};

    \foreach \source in {1,...,4}
        \foreach \dest in {1,...,5}
            \path (H1-\dest)  edge  (I-\source);

    \foreach [remember=\N as \lastN (initially 1)] \N in {2,...,\Nhidden}
       \foreach \source in {1,...,5}
           \foreach \dest in {1,...,1}
               \path  (H\N-\dest) edge [green, thick, domain=-2:2] (H\lastN-\source) ;
    \foreach [remember=\N as \lastN (initially 1)] \N in {2,...,\Nhidden}
       \foreach \source in {1,...,5}
           \foreach \dest in {2,...,5}
               \path (H\N-\dest) edge   (H\lastN-\source);

    \foreach \source in {1,...,5}
        \path   (O)  edge (H\Nhidden-\source);


    \node[annot,left of=hl1] {Input layer};
    \node[annot,right of=hl\Nhidden] {Output layer};
\end{tikzpicture}
\caption{ An example of backpropagation of two hidden layers. Green edges represent the stochastic gradient  $G$ in Algorithm  1 .} \label{fig:M1}
\end{figure}
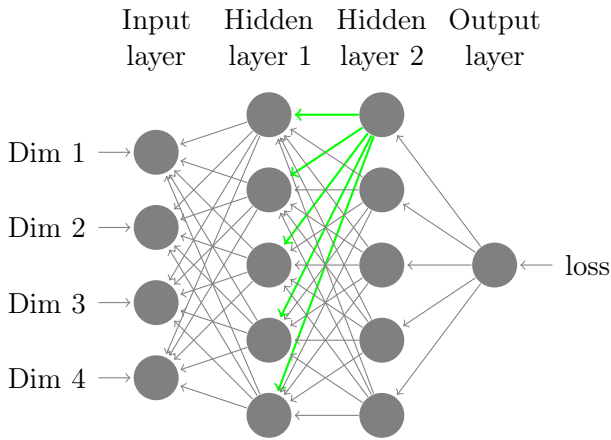

\begin{algorithm}
  \caption{Gradient Descent with Line Search Method}
  \begin{algorithmic}[1]
    \Function {line-search}{$x, b_0,\nabla F(x)$}
     \State $x_{new}\gets x-\frac{1}{b_0} \nabla F(x)$
      \While{$F(x_{new})> F(x)-\frac{b_0}{2} \| \nabla F(x)\|^2$}
        \State $b_0\gets 2b_0$
         \State $x_{new}\gets x-\frac{1}{b_0} \nabla F(x)$
      \EndWhile
      \State \textbf{return} $x_{new}$
    \EndFunction 
  \end{algorithmic}
\end{algorithm}

%
\begin{algorithm}[H]
  \caption{AdaGrad-Norm with momentum in PyTorch}
\begin{algorithmic}[1]
  \State {\bfseries Input:}   Initialize $x_0 \in \mathbb{R}^d, b_{0}>0, v_0\leftarrow 0, j \leftarrow 0, \beta \leftarrow 0.9$,  and the total iterations $N$.
    \For{\texttt{ $j = 0,1, \ldots, N$ }}
    \State Generate  $\xi_{j}$ and $G_{j} = G(x_{j}, \xi_{j})$
    \State $v_{j+1} \leftarrow \beta v_{j} + (1-\beta)G_j$ 
 	 \State   $ x_{j+1} \leftarrow x_{j} - \frac{ v_{j+1} }{b_{j+1}}$  with $b_{j+1}^2 \leftarrow  b_{j}^2 +  \|G_{j} \|^2$ 
       \EndFor
\end{algorithmic}
  \end{algorithm}


\end{document}